\documentclass[nohyperref]{article}
\usepackage{import}

\usepackage[table]{xcolor}
\usepackage{float}
\newcommand{\defeq}{\vcentcolon=}

\newcommand{\dist}{\mathcal{D}}
\newcommand{\loss}{\ell}
\newcommand{\cost}{\mathcal{L}}
\newcommand{\E}{\mathbb{E}}
\newcommand{\gap}{\mathcal{G}}
\newcommand{\samples}{\mathcal{S}}
\usepackage{pdfpages}
\usepackage{xcolor}

\newcommand{\Prob}{\text{Pr}}
\newcommand{\interior}{\text{int}}
\newcommand{\closure}{\text{cl}}

\newcommand{\cX}{\mathcal{X}}
\newcommand{\cY}{\mathcal{Y}}
\newcommand{\cN}{\mathcal{N}}

\newcommand{\adv}{\text{AT}}
\newcommand{\shift}{\text{SHIFT}}
\newcommand{\ERM}{\text{ERM}}
\newcommand{\test}{\text{test}}
\newcommand{\KL}{\text{KL}}
\newcommand{\LP}{\text{LP}}
\newcommand{\TV}{\text{TV}}
\newcommand{\HR}{\text{HR}}
\newcommand{\st}{\text{s.t.}}
\newcommand{\supp}{\text{supp}}
\newcommand{\github}[1]{%
   \href{#1}{\faGithubSquare}%
}
\usepackage{pgfplots}
\usepackage{tabularx}
\usepackage{fontawesome5}
\usepgfplotslibrary{groupplots,dateplot}
\usepackage{svg}
\usepackage{float}
\usepackage{filecontents}

\usepackage{microtype}
\usepackage{graphicx}
\usepackage{subfigure}
\usepackage{booktabs} 

\usepackage[noline,ruled]{algorithm2e}

\usepackage{hyperref}

\usepackage{acronym}
\acrodef{AT}[AT]{adversarial training}
\acrodef{ERM}[ERM]{empirical risk minimization}
\acrodef{DRO}[DRO]{distributionally robust optimization}
\acrodef{HR}[HR]{holistic robust}
\acrodef{KL}[KL]{Kullback-Leibler}
\acrodef{LP}[LP]{L\'evy-Prokhorov}
\acrodef{TV}[TV]{Total Variations}
\acrodef{PGD}[PGD]{projected gradient descent}
\acrodef{IID}[IID]{independent identically distributed}
\acrodef{WDRO}[WDRO]{Wasserstein distributionally robust optimization}
\acrodef{SOTA}[SOTA]{state of the art}
\acrodef{SGD}[SGD]{stochastic gradient descent}



\usepackage[accepted]{icml2023}

\usepackage{amsmath}
\usepackage{amssymb}
\usepackage{mathtools}
\usepackage{amsthm}
\usepackage{amsfonts}
\usepackage{bbm}

\theoremstyle{plain}
\newtheorem{theorem}{Theorem}[section]

\newtheorem{lemma}[theorem]{Lemma}

\theoremstyle{definition}

\newtheorem{assumption}[theorem]{Assumption}

\theoremstyle{remark}

\usepackage{amsmath}
\DeclareMathOperator*{\argmax}{arg\,max}
\DeclareMathOperator*{\argmin}{arg\,min}

\icmltitlerunning{Certified Robust Neural Networks: Generalization and Corruption Resistance}

\renewcommand{\tfrac}[2]{#1/#2}
\renewcommand{\d}{\text{d}}

\begin{document}

\twocolumn[
\icmltitle{Certified Robust Neural Networks: Generalization and Corruption Resistance}



\icmlsetsymbol{equal}{*}


\begin{icmlauthorlist}
\icmlauthor{Amine Bennouna}{mit}
\icmlauthor{Ryan Lucas}{mit}
\icmlauthor{Bart Van Parys}{mit}
\end{icmlauthorlist}

\icmlaffiliation{mit}{Operations Research Center, Massachusetts Institute of Technology, Cambridge, MA, USA}

\icmlcorrespondingauthor{Amine Bennouna}{amineben@mit.edu}
\icmlcorrespondingauthor{Ryan Lucas}{ryanlu@mit.edu}
\icmlcorrespondingauthor{Bart Van Parys}{vanparys@mit.edu}

\icmlkeywords{Neural Networks, Adversarial Attacks, Data Poisoning, Distributional Robustness}

\vskip 0.3in
]



\printAffiliationsAndNotice{}  

\begin{abstract}
Recent work have demonstrated that robustness (to ``corruption'') can be at odds with generalization.
Adversarial training, for instance, aims to reduce the problematic susceptibility of modern neural networks to small data perturbations.
Surprisingly, overfitting is a major concern in adversarial training despite being mostly absent in standard training.
We provide here theoretical evidence for this peculiar ``robust overfitting'' phenomenon.
Subsequently, we advance a novel distributionally robust loss function bridging robustness and generalization.
We demonstrate both theoretically as well as empirically the loss to enjoy a certified level of robustness against two common types of corruption|data evasion and poisoning attacks|while ensuring guaranteed generalization.
We show through careful numerical experiments that our resulting holistic robust (HR) training procedure yields \acl{SOTA} performance.
Finally, we indicate that HR training can be interpreted as a direct extension of adversarial training and comes with a negligible additional computational burden. 
A ready-to-use python library implementing our algorithm is available at \url{https://github.com/RyanLucas3/HR_Neural_Networks}.
\end{abstract}

\section{Introduction}
\label{sec:intro}
Recent work has shown that many modern neural networks are vulnerable to data corruption putting into question the reliability and performance of such models at deployment time.
Robustness has in recent years, next to out-of-sample generalization, become a central research topic in the machine learning community.
The key question is how to develop machine learning models which are reliable and perform well despite being trained on corrupted training data and evaluated on perturbed test data.

Data corruption can be due to multiple causes ranging from benign imprecise data collection to malicious adversaries.
Two types of corruption, each associated with a distinct cause, stand out in prior work which we briefly discuss.

\paragraph{Evasion Corruption.} This type of corruption results in small, often imperceptible, perturbations of all the test or train instances and may fool a naively trained model to suffer poor performance at deployment time.
\citet{goodfellow2014explaining} study evasion attacks at test time and show that neural network models can suffer a severe decrease in accuracy when test instances are subjected to an imperceptible amount of noise.
This pioneering work spurred on a rich literature on attacks \cite{tramer2020adaptive} as well as defense strategies such as the popular adversarial training \cite{madry2018towards, bai2021recent}.
Evasion corruption need not necessarily be caused by an adversary or even take place at test time.
Indeed, noise is present in training data as well often as a result of measurement imperfections.
Examples include street noise and background chatter in speech recognition \cite{li2014overview, mitra2017robust} or impulse noise and blur in image classification \cite{hendrycks2019benchmarking,hendrycks2021many}.

\paragraph{Poisoning Corruption.} In this type of corruption, a certain fraction of the training data points is drastically changed.
This alteration of the training dataset may then mislead the learning in selecting a ``bad'' model.
Clearly, if a large majority of data points is affected, the data set is wholly corrupted and there is no point in hoping for a good performance.
Hence, poisoning corruption is assumed to affect only a small fraction of the data points.
One example is label flipping in classification problems, where a small portion of the training data points are wrongly labeled or an adversary can select a small number of labels to flip before training time \cite{biggio2012poisoning}.
Poisoning corruption has also been shown to considerably degrade performance at deployment time.
For instance, \citet{nelson2008exploiting} shows that altering only $1\%$ of email spam labels can completely fool commercial spam filters. Several related attacks and defense mechanisms have been identified in follow-up work \cite{goldblum2022dataset}.


\paragraph{Generalization.} Beyond protecting against both discussed types of attacks, an equally important challenge is out-of-sample performance.
A model generalizes if it provides protection against \textbf{statistical error} which we take here to mean the difference between the finite number of data points provided during training time as opposed to the full data distribution.
Indeed, all models must avoid \textit{overfitting} to a particular training set and generalize to the test data.
Several classical approaches have been developed to address statistical error such as $\ell_1,\ell_2$ regularization, early stopping, and data augmentation. Such models seek not only a good expected performance under the randomness of the samples but also low variance. To generalization well, models need to avoid being overly sensitive to a particular training data and rather seek ``satisfactory'' performance on all ``likely'' data sets of the task at hand.

Generalization and robustness to each type of corruption have been extensively studied in isolation. However, the interaction between corruption protection and generalization has been largely ignored: existing ``robust'' approaches typically dismiss statistical error. Recent work has shown that robustness (to corruption) can be at odds with generalization, highlighting the need to consider corruption and statistical error simultaneously.
\citet{schmidt2018adversarially} and \citet{rice2020overfitting} exhibit theoretical and empirical evidence that \ac{AT}---which provides robustness against evasive corruption---suffers from notably worse generalization than natural training.
\citet{rice2020overfitting} show in particular that the surprising generalization properties of neural networks, such as improved out-of-sample performance with larger models, vanishes under \ac{AT}. \citet{schmidt2018adversarially} indicate that ``understanding the interactions between robustness, classifier model, and data distribution from the perspective of generalization is an important direction for future work''.
Hence, while \ac{AT} aims to improve generalization performance by protecting against evasive corruption, it seems inadvertently to limit generalization by exacerbating the adverse effect of statistical error.
This interesting phenomenon, dubbed \textit{robust overfitting}, and its remedies is the topic of several recent papers \cite{wu2020adversarial,li2022understanding,yu2022understanding}. 
Similar observations were also noted in the case of poisoning, where poisoning corruption exacerbates overfitting \cite{zhang2018generalized,hendrycks2019using}.

Besides generalization issues, existing robust approaches typically protect against one single type of corruption exclusively, either evasion or poisoning. However, in practical settings, we typically are unaware of what type of corruption, if any, plagues the data. Unsurprisingly, a model designed for a specific type of corruption risks considerably poor performance when deployed in a setting with a different type of corruption.
Hence, to develop practical robust models, it is important to protect against both types of corruption, and their potential combination.

\subsection{Contributions}

In this paper, we study generalization when learning with data affected by evasion and/or poisoning corruption on top of statistical error simultaneously.

We first provide theoretical evidence of the significance of the interaction between corruption and statistical error.
We show that the generalization gap of \ac{AT} can be decomposed into an \ac{ERM} generalization gap and a positive term which we interpret as an \textit{adversary shift} (Section \ref{sec: adv and generalization}).
This result provides theoretical evidence to the robust overfitting phenomenon observed empirically by \citet{rice2020overfitting} and indicates that \ac{AT} indeed suffers from worse generalization than \ac{ERM}.
This theoretical observation illustrates that protecting against corruption only while ignoring statistical error can inadvertently yield worse generalization performance.

Second, we tackle the problem of robustness against both types of corruption and statistical error simultaneously.
We do so by designing and optimizing a provable ``tight'' upper bound on the \textit{test loss} when learning with data subject to corruption.
This is in contrast to most prior work which is either empirically motivated or, like \ac{AT}, merely optimizes empirical proxies to such upper bounds|dismissing statistical error.
To design this upper bound, we construct a set around the corrupted training distribution which contains the testing distribution with high probability.
A valid upper bound on the test loss can then be taken as the worst-case expectation of the loss over all distributions in this ambiguity set.
We show that training a neural network model based on such associated \ac{DRO} objective function can be interpreted as a natural generalization to standard \ac{AT} which protects against the two discussed corruption sources simultaneously and particularly enjoys strong generalization.
Our \ac{HR} objective function has three parameters $\alpha\in [0,1]$, $r\geq 0$ and $\cN$, and is guaranteed to be an upper bound on the test performance with high probability $1-e^{-rn+O(1)}$ when less then a fraction $\alpha$ of all $n$ samples are tampered by poisoning, and the evasion corruption is bounded within the set $\cN$.
The latter guarantee certifies robustness for \textit{any} desired level of evasion attacks ($\cN$) and poisoning attacks ($\alpha$): our out-of-sample deployment performance is at least as good as the estimated in-sample performance with high probability.

Finally, we propose a practical \ac{HR} loss training algorithm for neural networks with negligible additional computational burden compared to classical \ac{AT}.
We conduct careful numerical experiments which illustrate the efficacy of \ac{HR} training on both MNIST and CIFAR-10 datasets in all possible corruption settings: clean, affected by poisoning and/or evasive corruption. By using validation on the protection parameters $\alpha,r,\cN$, \ac{HR} training adapts to and protects against whatever corruption plagues the data including combination of evasion and poisoning, while ensuring strong generalization. In all settings, \ac{HR} training achieves \ac{SOTA} performance and outperforms classical evasion defense mechanisms (\ac{AT} of \citet{madry2018towards}, TRADES of \citet{zhang2019theoretically}), poisoning defense mechanisms (DPA of \citet{levine2020deep,wang2022improved}), and the combination of poisoning and evasion defenses.
Furthermore, we replicate the experiments in \citet{rice2020overfitting} and provide numerical evidence that \ac{HR} training largely circumvents the robust overfitting phenomenon experienced by \ac{AT}.
Finally, we show empirically that the \textit{training loss} of \ac{HR} training is an upper bound on its \textit{adversarial test loss} with high probability, providing a generalization guarantee.
Notably, this observation does not hold for any of the prior robust approaches.

We release an open-source Python library (\url{https://github.com/RyanLucas3/HR_Neural_Networks}) that can be directly installed through \texttt{pip}. With our library, HR training can be used to provide robustness by changing essentially only one line of the training code. We also provide an extensive Colab tutorial on using HR in the github repository.

\subsection{Related work}\label{rel_work}

\textbf{Robust Overfitting.} Understanding and eliminating the robust overfitting phenomenon has been the topic of previous work.
\citet{rice2020overfitting} suggests that conventional remedies for overfitting cannot improve upon early stopping.
To compensate for poor generalization of \ac{AT} \cite{schmidt2018adversarially}, data augmentation techniques such as semi-supervised learning 
\cite{alayrac2019labels, carmon2019unlabeled} and data interpolation \cite{lee2020adversarial,chen2021guided} have been considered.
Alternative approaches directly modify the \ac{AT} loss function by reweighing data points
\cite{wang2019improving, zhang2021geometry}, mitigating memorization effects \cite{dong2021exploring} or favoring large loss data \cite{yu2022understanding}.
\citet{kulynych2022you} recently considers differential privacy as a way to ensure strong generalization.
Previous works mostly attempt to eliminate robust overfitting through empirical insight.
In our work, however, we present theoretical evidence with regards to the working of the robust overfitting mechanism (Theorem \ref{thm: AT gap}) and propose a disciplined approach at its elimination.
Our proposed \ac{HR} training is indeed a theoretically disciplined approach which comes with a theoretical certificate against overfitting (Theorem \ref{thm: upper-bound}).
On a practical level, \ac{HR} can be understood as a smart reweighing of the standard adversarial examples considered in \citet{madry2018towards} and in doing so equips standard \ac{AT} against with protection against both poisoning and evasion attacks. 

\textbf{Distributionally Robust Optimization.} \ac{WDRO} has recently received attention in the context of \ac{AT} \cite{staib2017distributionally,sinha2017certifying,wong2019wasserstein,bui2022unified}.
\ac{WDRO} safeguards against attackers with a bound on the average perturbation applied to all data points as opposed to standard \ac{AT} which protects against attacks in which the perturbation on each individual data point is bounded.
\ac{WDRO} however still suffers from fundamentally the same robust overfitting phenomenon as standard \ac{AT} as it does not take into account statistical error.
However, the \ac{KL} divergence ambiguity sets are known in the \ac{DRO} literature to be superior at providing safeguards against statistical error \cite{lam2019recovering, vanparys2021data} than their Wasserstein counterparts in the absence of data corruption.
The \ac{LP} metric has been shown in the statistical literature \cite{hampel1971general} to precisely captures certain types of data corruption.
Recently, \citet{bennouna2022holistic} proposed a novel ambiguity set combining both the \ac{KL} and \ac{LP} metrics in an attempt to safeguard against corruption and statistical error simultaneously.
Contrary to the \ac{WDRO} approach of \citet{sinha2017certifying}, the \acl{HR} \ac{DRO} formulation based on this novel ambiguity set is proven here to guarantee an upper bound on the adversarial test loss for \textit{any} desired corruption level and \textit{any} loss function.
Moreover, the \ac{HR} formulation is not harder to train than classical \ac{AT}.
This paper considers the ambiguity set of \citet{bennouna2022holistic} in the context of evasion (at \textit{test time}) and poisoning, and proves new statistical properties. In particular, we prove the HR ambiguity set to provide \textit{uniform} finite sample generalization bounds, independent of the dimentionality of the model class|a crucial novel property in the context of deep learning.

\section{Preliminaries: Learning under corruption}
We formalize here the setting of learning under evasion and poisoning corruption.
A typical underlying assumption in prior work is that the empirical distribution of the data is close to the out-of-sample distribution, neglecting statistical error and risking poor generalization.
Here, we carefully consider both types of corruption and their interaction with statistical error.

Consider a learning problem with covariates $x \in \cX$ and outputs $y \in \cY$ (e.g., labels in classification problems), following a joint distribution $\dist$ with compact support $\mathcal Z\subseteq \cX\times \cY$ which represents the distribution of the clean data. 
We do not observe samples directly from this clean distribution $\dist$ but rather up to a fraction $\alpha$ of these $n$ independent samples are wholly corrupted (poisoning corruption).
We hence observe the $n$ samples $\{z_i\defeq (x_i,y_i)\}_{i=1}^n \in \mathcal Z^n$ post corruption and denote with $\dist_n$ their empirical distribution.
During test time, the test data is again sampled from $\dist$ but is subjected to noise out of a set $\cN$ (evasion corruption).
Typical work in AT considers for example the noise bounded in an $l_p$ ball \cite{madry2018towards}. 
We denote the distribution of the perturbed test instances with $\dist_{\test}$.  
We observe $\dist_n$ but our goal is to learn a model that minimizes the test loss
\begin{equation*}
    \cost_{\test}(\theta) \defeq \E_{\dist_{\test}} [\loss(\theta,Z)],
\end{equation*}
where $\theta \in \Theta$ are the model parameters, $\ell$ is a bounded loss function (e.g., cross-entropy on neural networks); see Figure \ref{fig:data}.
Here, as in the remainder of the paper, $\E_{\dist'}$ denotes expectation over a random variable $Z$ with distribution $\dist'$.

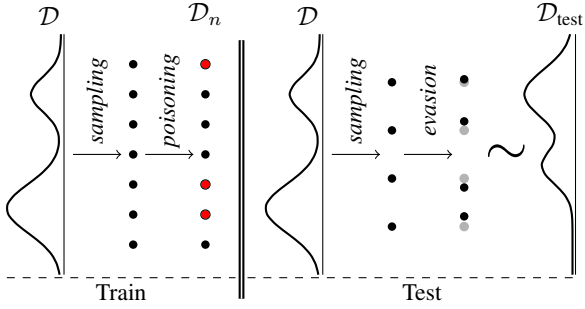
\begin{figure}
  \centering
  \begin{tikzpicture}[scale=0.8]


\def\xD{-4}
\draw (\xD-0.2,2) node[above] {$\dist$};
\draw[scale=1,thick,domain=-2:2,smooth,variable=\x]
  plot ({-0.5*exp(-6*(\x-1)^2) - 0.9*exp(-3*(\x+0.9)^2) + \xD},{\x});
\draw[-] (\xD+0.05,-2)--(\xD+0.05,2);
  
\def\ptsradius{2}

\def\xpts{-2.8}

\draw[->] (\xD+0.2,0)--(\xpts-0.2,0);
\draw (\xD+0.6,0.9) node[rotate=90] {\small \textit{sampling}};

\foreach \y in {-1.5, -1, -0.5, 0, 0.5,1,1.5}{
    \fill (\xpts,\y) circle[radius=\ptsradius pt];
}

\def\xptspoison{-1.6}

\draw[->] (\xpts+0.2,0)--(\xptspoison-0.2,0);
\draw (\xpts+0.6,0.9) node[rotate=90] {\small \textit{poisoning}};

\draw (\xptspoison,2) node[above] {${\dist}_n$};
\foreach \y in {-1.5, 0, 0.5,1}{
    \fill (\xptspoison,\y) circle[radius=\ptsradius pt];
}
\foreach \y in {-1, -0.5, 1.5}{
    \fill[red] (\xptspoison,\y) circle[radius=\ptsradius pt];
    \draw (\xptspoison,\y) circle[radius=2.2 pt];
}

\draw[thick,double] (-1,-2.4)--(-1,1.9);
\draw (-3,-2) node[below] {\small Train};
\draw[-,dashed] (-4.9,-2.05)--(-1.1,-2.05);
\draw (2,-2) node[below] {\small Test};
\draw[-,dashed] (-0.9,-2.05)--(4.55,-2.05);

\def\xDt{0.3}
\draw (\xDt-0.2,2) node[above] {$\dist$};

\draw[scale=1,thick,domain=-2:2,smooth,variable=\x]
  plot ({-0.5*exp(-6*(\x-1)^2) - 0.9*exp(-3*(\x+0.9)^2) + \xDt},{\x});
\draw[-] (\xDt+0.05,-2)--(\xDt+0.05,2);

\def\xptst{1.5}

\draw[->] (\xDt+0.2,0)--(\xptst-0.2,0);
\draw (\xDt+0.6,0.9) node[rotate=90] {\small \textit{sampling}};

\foreach \y in {-1.2, -0.4, 0.4, 1.2}{
    \fill (\xptst,\y) circle[radius=\ptsradius pt];
}

\def\xptstcor{2.7}

\draw[->] (\xptst+0.2,0)--(\xptstcor-0.2,0);
\draw (\xptst+0.6,0.9) node[rotate=90] {\small \textit{evasion}};

\foreach \y in {-1.2, -0.4, 0.4, 1.2}{
    \draw[draw=black!30, fill=black!30] (\xptstcor,\y) circle[radius=\ptsradius pt];
}

\fill (\xptstcor,-1.2+0.17) circle[radius=\ptsradius pt];
\fill (\xptstcor,-0.4-0.15) circle[radius=\ptsradius pt];
\fill (\xptstcor,0.4+0.15) circle[radius=\ptsradius pt];
\fill (\xptstcor,1.2+0.05) circle[radius=\ptsradius pt];

\def\xDtest{4.5}

\draw (\xptstcor+0.7,0) node {\huge $\sim$};

\draw (\xDtest-0.2,2) node[above] {$\dist_\test$};

\draw[scale=1,thick,domain=-2:2,smooth,variable=\x]
  plot ({-0.8*exp(-3*(\x-1)^2) - 0.5*exp(-6*(\x+0.2)^2) + \xDtest},{\x});
\draw[-] (\xDtest+0.05,-2)--(\xDtest+0.05,2);

\end{tikzpicture}
  \caption{Illustration of the distributions $\dist, \dist_n$ and $\dist_\test$. Here, the red points indicate the modified samples by poisoning.}
  \label{fig:data}
\end{figure}


\section{Adversarial training and generalization}
\label{sec: adv and generalization}

We now highlight the importance of robustness against statistical error when learning under corruption.
Consider the classical setting of adversarial evasion attacks in the absence of poisoning (i.e., $\alpha =0$) and let here $\loss^{\cN}(\theta, z) \defeq \max_{\delta \in \cN\!,\,  z+\delta\in \mathcal Z} \loss(\theta,z+\delta)$ when the adversary can perturb the data $z$ within the set $\cN \ni 0$.
The \ac{AT} models of \citet{madry2018towards} minimize the empirical adversarial loss
\(
\cost_\adv(\theta) \defeq \E_{\dist_n}\!\!\left[\loss^{\cN}(\theta, z) \right]
\)
and hope that its minimizer $\theta^\adv_{\dist_n}$ over the parameter set $\Theta$ is safeguarded against evasion attacks.
We have indeed that the testing loss $\cost_{\test}(\theta)$ is upper bounded by $\E_{\dist}\!\left[\loss^{\cN}(\theta, z) \right]$ and hence minimizing its empirical counterpart $\cost_\adv(\theta)$ makes intuitive sense.
However, due to statistical error there might be a considerable gap between the adversarial test loss $\E_{\dist}\!\left[\loss^{\cN}(\theta, z) \right]$ and the empirical adversarial loss $\cost_\adv(\theta)$ putting into question the intuitive appeal of this approach.
\ac{AT} typically does overfit to the worst-case perturbations of the \textit{data points}---rather than the out-of-sample distribution---and thus generalizes poorly.
This is akin to classical overfitting in \ac{ERM} caused by the gap between the minimized loss $\cost_\ERM(\theta) \defeq \E_{\dist_n} [\loss(\theta,Z)]$ and the out-of-sample cost $\E_{\dist} [\loss(\theta,Z)]$.
In the case of \ac{AT}, it has been empirically observed that overfitting is in fact exacerbated when compared to \ac{ERM} \cite{rice2020overfitting}.
We now provide theoretical insight into why \ac{AT} suffers from overfitting more than \ac{ERM}. 

First, we present a distributional perspective on \ac{AT} which will be useful in what follows.
Consider the ambiguity set 
\begin{equation}\label{eq:LP-set}
\mathcal{U}_{\cN}(\dist_n) \defeq \{ \dist_\test' : \LP_{\cN}(\dist_n, \dist'_\test) \leq 0 \}
\end{equation}
associated with the optimal transport metric 
\begin{align}    
&\LP_\cN(\dist_n, \dist_\test') 
\defeq \label{eq: LP-def}\\
&\inf \left\{ \int \mathbf{1}(z'-z \not \in \cN) \, \d\gamma(z,z') \; : \; \gamma \in \Gamma(\dist_n,\dist'_\test)\right\} \nonumber
\end{align}

where $\Gamma(\dist_n,\dist'_\test)$ denotes the set of all couplings on $\mathcal Z\times\mathcal Z$ between $\dist_n$ and $\dist'_\test$ \citep{villani2009optimal}.
It is straightforward to observe that
\begin{equation}\label{eq: AT DRO}
  \cost_\adv(\theta) = \max\{\E_{\dist'_\test}[\loss(\theta,Z)]\, : \dist'_\test \in \mathcal{U}_{\cN}(\dist_n)\}.
\end{equation}
Clearly, hence, naively considering \ac{DRO} formulations does not by itself eliminate robust overfitting.
We will show in Section \ref{sec:holistic-robust-loss} that a careful generalization of the previous \ac{DRO} formulation does protect against robust overfitting.


\paragraph{ERM overfitting gap.}
Let us first provide intuition on classical overfitting in \ac{ERM}.
As ${\dist}_n$ is here the empirical distribution of a random dataset it is itself a random variable and we denote with $\samples_n$ its distribution.
Denote the associated \ac{ERM} solution to some arbitrary distribution $\dist'$ as $\theta_{\dist'} \in \argmin_{\theta\in\Theta} \E_{\dist'}[\loss(\theta,Z)]$ which we assume to be unique for simplicity.
\ac{ERM} overfitting is bound to occur as the expected testing loss
is larger than the expected training loss,
i.e.,
\begin{align}
  \E_{\dist_n \sim \samples_n}\!\!\left[ 
                                    \E_{\dist}[\loss(\theta_{\dist_n}\!,\!Z)]
                                    \right] 
                                  \!\!=&
                                    \E_{\dist_n^1, \dist_n^2 \sim \samples_n}
                                    \![
                                    \E_{\dist_n^1}\![\loss(\theta_{\dist_n^2}\!,\!Z)]
                                    ] \label{eq: testing ERM reformulation} \\
                                  \geq&
                                    \E_{\dist_n^1, \dist_n^2 \sim \samples_n}
                                    \![
                                    \E_{\dist_n^1}\![\loss(\theta_{\dist_n^1}\!,\!Z)]
                                    ] \label{eq: training ERM reformulation} \\
                                  =&
                                    \E_{{\dist}_n \sim \samples_n}\!\!\left[ 
                                    \E_{{\dist}_n}[\loss({\theta}_{{\dist}_n},Z)]
                                    \right]. \label{eq: training ERM}
\end{align}
Here, Equality \eqref{eq: testing ERM reformulation} is due to Fubini's theorem.
Inequality \eqref{eq: training ERM reformulation} follows from the fact that $\theta_{\dist_n^1}$ is a minimizer of $\E_{\dist_n^1}[\loss(\theta,Z)]$.
Equality \eqref{eq: training ERM} follows from the fact that ${\dist_n^2}$ and ${\dist_n^1}$ are independent and share the same distribution $\samples_n$.
The right hand side of \eqref{eq: training ERM} is precisely the expected training loss.
We define the \ac{ERM} \textit{overfitting gap}, between training and testing loss, as 
\begin{align*}
  & \gap_\ERM(\samples_n,\loss) \\
  & \defeq \E_{\dist_n^1, \dist_n^2 \sim \samples_n}
            \left[
              \E_{\dist_n^2}[\loss(\theta_{{\dist}_n^1},Z)]
              -
              \E_{\dist_n^1}[\loss(\theta_{{\dist}_n^1},Z)]
              \right]\! \\
  &= \E_{\dist_n \sim \samples_n}\![\E_{\dist}[\loss(\theta_{\dist_n},\!Z)]\!-\!\E_{{\dist}_n}[\loss({\theta}_{{\dist}_n},\!Z)]].
\end{align*}
This gap definition is general in that it characterizes overfitting for any data generation process $\samples_n$.
It also highlights what fundamentally causes overfitting in \ac{ERM}: an expected difference in empirical distribution between training ($\dist_n^1$) and test ($\dist_n^2$) data.

\paragraph{AT overfitting gap.}
Let us now examine the robust overfitting phenomenon in \ac{AT}.
For an arbitrary distribution $\dist'$ and an arbitrary model $\theta$ we define its associated worst-case adversary as the distribution $\dist'(\theta) \in \arg\max\{\E_{\dist'_\test}[\loss(\theta,Z)]\, : \dist'_\test \in \mathcal{U}_{\cN}(\dist')\}$.
We denote in particular with $\dist'^\adv\defeq \dist'^\adv(\theta^\adv_{\dist'})$ the worst-case adversary of the \ac{AT} solution \(\theta^\adv_{\dist'}\in \arg
  \min_{\theta\in \Theta} \max\{\E_{\dist'_\test}[\loss(\theta,Z)]\, : \dist'_\test \in \mathcal{U}_{\cN}(\dist')\}
\) which minimizes $\cost_\adv$ (see \eqref{eq: AT DRO}).
We will assume in this section that the \ac{AT} best model $\theta^\adv_{\dist'}$ and its associated worst-case adversary $\dist'^\adv$ form a saddle-point.

\begin{assumption}[Saddle-Point]
  \label{assumption:saddle-point}
  For any $\dist'$ we have
  \(
  \E_{\dist'_\test}[\loss(\theta^\adv_{\dist'},Z)] \leq \E_{\dist'^\adv}[\loss(\theta,Z)] 
  \)
  for all $\theta\in \Theta$ and $\dist_{\test}'\in \mathcal U_{\cN}(\dist')$.
\end{assumption}

The previous saddle point condition implies in particular that the \ac{AT} best model also minimizes a nominal \ac{ERM} cost $\E_{\dist_n^\adv}[\loss(\theta,Z)]$ with respect to a worst-case adversary $\dist_n^\adv\sim \samples_n^\adv$, i.e.,
\(
  \textstyle\theta^\adv_{\dist_n} = \theta_{\dist^{\adv}_n} \in \arg\min_{\theta\in\Theta}\E_{\dist_n^\adv}[\loss(\theta,Z)].
\)
Assumption \ref{assumption:saddle-point} holds for any convex loss functions such as logistic regression \cite{sion1958general, bose2020adversarial}.
However, it is not necessarily verified for neural networks due to their inherent nonconvexity.
Recent work by \citet{gidel2021limited} suggests however that it may also hold for neural networks approximately.
We define the \textit{\ac{AT} overfitting gap} as  the expected difference between the adversarial testing loss and the \ac{AT} loss, i.e.,
\begin{align*}
& \gap_\adv(\samples_n, \loss)  \!\defeq\! \E_{\dist_n \sim \samples_n}\![\E_{\dist}[\loss^\cN\!(\theta^\adv_{\dist_n}\!,\!Z)]\!-\!\E_{{\dist}_n}[\loss^\cN\!(\theta^\adv_{{\dist}_n}\!,\!Z)]]\\
&\qquad =\E_{{\dist}_n \sim \samples_n} [ 
        \E_{\dist({\theta}^\adv_{{\dist}_n})}[\loss({\theta}^\adv_{{\dist}_n},Z)]
    -
        \E_{\dist_n^\adv}[\loss({\theta}^\adv_{{\dist}_n},Z)]
    ].
\end{align*}
Our next result shows that under a saddle-point assumption, the \ac{AT} overfitting gap can be decomposed into an \ac{ERM} overfitting gap and a \textit{nonnegative} term $\gap_\shift(\samples_n, \loss) \defeq  \E_{\dist_n^1,\dist_n^2 \sim \samples_n}[\E_{{\dist}_n^2({\theta}^\adv_{{\dist}_n^1})}[\loss(\theta^\adv_{\dist_n^1},Z)]-\E_{{\dist}_n^2({\theta}^\adv_{{\dist}_n^2})}[\loss(\theta^\adv_{\dist_n^1},Z)]]$.

\begin{theorem}\label{thm: AT gap}
  Let Assumption \ref{assumption:saddle-point} hold. Then, 
  \[
    \gap_\adv(\samples_n, \loss)= \gap_\ERM(\samples_n^\adv, \loss) 
      +
      \underbrace{\gap_\shift(\samples_n, \loss)}_{\geq 0}.
  \]
\end{theorem}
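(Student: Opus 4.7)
The plan is to rewrite $\gap_\adv(\samples_n,\loss)$ in terms of two independent copies $\dist_n^1,\dist_n^2\sim\samples_n$ of the training empirical distribution and verify, by direct algebra, that the result telescopes into $\gap_\ERM(\samples_n^\adv,\loss)+\gap_\shift(\samples_n,\loss)$, with nonnegativity of $\gap_\shift$ coming from the maximizing property of the worst-case adversary.

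First I would unpack Assumption \ref{assumption:saddle-point}. Setting $\dist'_\test=\dist'^\adv$ in the saddle-point inequality yields $\E_{\dist'^\adv}[\loss(\theta^\adv_{\dist'},Z)]\leq \E_{\dist'^\adv}[\loss(\theta,Z)]$ for every $\theta$, so $\theta^\adv_{\dist'}=\theta_{\dist'^\adv}$. This identifies the AT estimator on $\dist'$ with the ERM estimator on its worst-case adversary; in particular, when $\dist_n\sim\samples_n$, the random adversary $\dist_n^\adv$ has law $\samples_n^\adv$ (the pushforward of $\samples_n$ through $\dist'\mapsto\dist'^\adv$) and $\theta^\adv_{\dist_n}=\theta_{\dist_n^\adv}$. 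I would also record the immediate identity $\E_{\dist'(\theta)}[\loss(\theta,Z)]=\E_{\dist'}[\loss^\cN(\theta,Z)]$, which follows from the definitions of $\mathcal U_\cN$ in \eqref{eq:LP-set} and of $\loss^\cN$.

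Next I would perform a Fubini step that mirrors \eqref{eq: testing ERM reformulation}--\eqref{eq: training ERM}. For an independent copy $\dist_n^1\sim\samples_n$, conditioning on $\dist_n^2$ and averaging the $n$ iid draws forming $\dist_n^1$ gives $\E_{\dist_n^1}[\E_{\dist_n^1}[\loss^\cN(\theta^\adv_{\dist_n^2},Z)]]=\E_\dist[\loss^\cN(\theta^\adv_{\dist_n^2},Z)]$. Taking the outer expectation in $\dist_n^2$, relabeling $1\leftrightarrow 2$, and re-encoding $\loss^\cN$ via the worst-case distribution, I obtain
\[
\E_{\dist_n}\!\bigl[\E_\dist[\loss^\cN(\theta^\adv_{\dist_n},Z)]\bigr]=\E_{\dist_n^1,\dist_n^2}\!\bigl[\E_{\dist_n^2(\theta^\adv_{\dist_n^1})}[\loss(\theta^\adv_{\dist_n^1},Z)]\bigr],
\]
while the training term becomes $\E_{\dist_n^1}[\E_{\dist_n^{1,\adv}}[\loss(\theta^\adv_{\dist_n^1},Z)]]$. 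Adding and subtracting $\E_{\dist_n^{2,\adv}}[\loss(\theta^\adv_{\dist_n^1},Z)]$ inside the outer expectation splits $\gap_\adv$ exactly into two pieces: the first is $\gap_\ERM(\samples_n^\adv,\loss)$ because $\theta_{\dist_n^{1,\adv}}=\theta^\adv_{\dist_n^1}$ and $(\dist_n^{1,\adv},\dist_n^{2,\adv})$ is iid under $\samples_n^\adv$, and the second is precisely $\gap_\shift(\samples_n,\loss)$.

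Nonnegativity of $\gap_\shift$ is then pointwise: for each realization $\dist_n^2(\theta^\adv_{\dist_n^2})\in\mathcal U_\cN(\dist_n^2)$, while $\dist_n^2(\theta^\adv_{\dist_n^1})$ is by construction a maximizer of $\E_{\dist'_\test}[\loss(\theta^\adv_{\dist_n^1},Z)]$ over $\mathcal U_\cN(\dist_n^2)$, so the integrand is $\geq 0$. The main obstacle is the bookkeeping in the Fubini step: one must carefully exploit that $\samples_n$ is the law of an empirical distribution of iid draws from $\dist$ in order to exchange $\E_\dist[\cdot]$ with an expectation against an independent empirical copy, and then ensure that the map $\dist'\mapsto\dist'^\adv$ commutes with this independence structure so that $(\dist_n^{1,\adv},\dist_n^{2,\adv})$ can be recognized as an iid pair under $\samples_n^\adv$; the saddle-point identification $\theta^\adv_{\dist'}=\theta_{\dist'^\adv}$ is exactly what makes this bookkeeping reduce to $\gap_\ERM(\samples_n^\adv,\loss)$.
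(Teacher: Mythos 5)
Your proposal is correct and follows essentially the same route as the paper's proof: the Fubini/independent-copies reformulation using $\E_{\dist_n\sim\samples_n}[\dist_n]=\dist$, the add-and-subtract of $\E_{\dist_n^{2,\adv}}[\loss(\theta^\adv_{\dist_n^1},Z)]$, and the saddle-point identification $\theta^\adv_{\dist'}=\theta_{\dist'^\adv}$ to recognize the remaining piece as $\gap_\ERM(\samples_n^\adv,\loss)$. Your explicit pointwise argument for $\gap_\shift\geq 0$ (feasibility of $\dist_n^2(\theta^\adv_{\dist_n^2})$ versus optimality of $\dist_n^2(\theta^\adv_{\dist_n^1})$ in $\mathcal U_\cN(\dist_n^2)$) is a correct detail the paper leaves implicit.
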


The proof of Theorem \ref{thm: AT gap} can be found in Appendix \ref{App: theory of overfitting} and indicates in particular that \ac{AT} overfitting gap is always at least as large as the overfitting gap suffered by an \ac{ERM} procedure on data generated by the worst-case adversary $\dist_n^\adv$.
Hence, while \ac{ERM} overfitting is due to difference between train and test data sets, \ac{AT} suffers worse overfitting as its gap is due to a difference between (adversarial) train and test datasets and an additional adversary shift term.
This result gives theoretical evidence for the robust overfitting phenomenon observed empirically in a flurry of recent papers.
The additional adversary shift term can be interpreted as the sensitivity of the \ac{AT} solution to a change of adversary. Hence, Theorem 3.2 indicates that robust overfitting is due to the trained model adapting too much to the specific artificially added perturbations during AT. 

Let us detail this adversary shift gap $\gap_\shift$ interpretation. Recall that ${\theta}^\adv_{{\dist}_n}$ is an AT model trained with distribution ${\dist}_n$. Intuitively, ${\dist}_n^2$ can be seen as a testing distribution, and ${\dist}_n^2({\theta}^\adv_{{\dist}_n^1})$ (resp. ${\dist}_n^2({\theta}^\adv_{{\dist}_n^2})$) is a testing distribution with adversarial examples where the adversary is chosen against model ${\theta}^\adv_{{\dist}_n^1}$ (resp. ${\theta}^\adv_{{\dist}_n^2}$). Hence, each of the two terms of the gap measures the robust testing error under a \textit{distinct} adversary. The difference measures therefore the gap of testing error when the adversary changes. This implies that the adversary shift quantifies the \textit{sensitivity of the AT model to a change of adversary} resulting from a change of training distribution: if we train with data ${\dist}_n^1$, AT trains with perturbation from adversary against ${\theta}^\adv_{{\dist}_n^1}$, while if we train with data ${\dist}_n^2$, AT trains with perturbation from a distinct (shifted) adversary, namely against ${\theta}^\adv_{{\dist}_n^2}$. This gap is large when the trained model adapts too much (overfits) to the specific added perturbation to the training samples with AT.

\section{A holistic robust loss}
\label{sec:holistic-robust-loss}

From the previous section, it is clear that practical robustness requires us to take into account statistical error in addition to poisoning and evasion corruption.
The key idea, following the motivation behind \ac{AT}, is to construct an upper bound on the test loss when learning with corruption.
As opposed to classical \ac{AT}, such upper bound should account for poisoning as well, and also, crucially, should bound the \textit{out-of-sample} adversarial test loss $\cost_\test$ rather than its empirical proxy.

Let us first build intuition on our upper bound. As Section \ref{sec: adv and generalization} demonstrate, an empirical proxy of the test loss typically underestimates the test loss of the trained model, as the expected overfitting gap is always positive $\gap_\ERM,\gap_\adv \geq 0$. This is caused by the fluctuations of the empirical distribution of samples around the out-of-sample distributions generating the samples. Hence, instead of considering the ambiguity set \eqref{eq:LP-set} of the \ac{AT} loss \eqref{eq: AT DRO} capturing only the corruption at hand, we augment this ambiguity set to capture the fluctuations of the empirical distribution due to sampling. These statistical fluctuations are captured by the \acl{KL} divergence \citep{vanerven2014renyi}, denoted here as KL, as we will demonstrate shortly. On the other hand, the optimal transport metric \ac{LP} (defined in \eqref{eq: LP-def}) will capture both evasion and poisoning. This augmentation will increase the estimated loss ``just enough'' to be a tight upper bound on the test loss.

Define here an ambiguity set
\(
\mathcal{U}_{\cN,\alpha,r}(\dist_n) \defeq \{ \dist_\test' : \exists \dist' ~\st~ \LP_{\cN}(\dist_n, \dist') \leq \alpha, \, \KL(\dist',\dist_\test') \leq r\}
\),
and the associated \acf{HR} loss as
\begin{equation}\label{eq: HR-loss}
  \cost_\HR ^{\cN,\alpha,r}(\theta) \!\defeq\!\max \{ \E_{\dist_\test'}[\loss(\theta,Z)] : \dist_\test'\! \in \mathcal{U}_{\cN,\alpha,r}(\dist_n) \},
\end{equation}
where the desired protection against evasion corruption is controlled by $\cN$, protection against poisoning corruption is controlled by $\alpha$, and statistical error is accounted for by $r$, respectively.
Intuitively, the LP ball $\{\mathcal{D}' \; : \; LP_{\mathcal{N}}(\mathcal{D}_n,\mathcal{D}') \leq \alpha\}$ contains all the possible ``training" empirical distributions $\mathcal{D}'$ directly sampled from the adversarial testing distribution $\mathcal{D}_{test}$|that is, by removing the poisoned data points and then perturbing each data point by evasion. As $\mathcal{D}'$ is an empirical distribution of samples from $\mathcal{D}_{test}$, then large deviation theory \cite{dembo2009large} ensures that the set $\{\mathcal{D}'_{test} \; : \; KL(\mathcal{D}',\mathcal{D}'_{test}) \leq r\}$ contains (intuitively) $\mathcal{D}_{test}$ with high probability $1-e^{-rn + O(1)}$. This intuitive explanation is merely a simplified illustration. In the proof of subsequent Theorem \ref{thm: upper-bound} (which we defer to Appendix \ref{app:holistic-robust-loss}) we prove indeed that the considered ambiguity set around the random observed empirical distribution contains the adversarial testing distribution (with probability $1-e^{-rn+O(1)}$), given that less than a fraction $\alpha$ of all training data points are corrupted by poisoning and the evasive attack on the test data is limited to a compact set in the interior of the set $\cN$.


\begin{theorem}[Robustness Certificate]\label{thm: upper-bound}
  Suppose that less than a fraction $\alpha\in (0,1]$ of the \ac{IID} training data is poisoned and the evasion attack on the test set is limited to a compact set in $\interior(\cN)$.
  Then,
  \(
    \Prob(\cost_\HR ^{\cN,\alpha,r}(\theta) \geq \cost_\test(\theta)~~\forall\theta\in\Theta)\geq 1-e^{-r n+O(1)}.
  \)
\end{theorem}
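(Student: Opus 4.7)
The plan is to reduce Theorem~\ref{thm: upper-bound} to a single high-probability containment event $\{\dist_\test \in \mathcal U_{\cN,\alpha,r}(\dist_n)\}$. On this event, the definition \eqref{eq: HR-loss} of $\cost_\HR^{\cN,\alpha,r}$ as a supremum over $\mathcal U_{\cN,\alpha,r}(\dist_n)$ immediately yields $\cost_\HR^{\cN,\alpha,r}(\theta) \geq \E_{\dist_\test}[\loss(\theta,Z)] = \cost_\test(\theta)$ for \emph{every} $\theta \in \Theta$ simultaneously, so the uniformity over $\Theta$ in the statement comes for free from working with a model-independent certifying event. Everything then boils down to exhibiting, with probability $1 - e^{-rn+O(1)}$, a witness distribution $\dist'$ satisfying $\LP_\cN(\dist_n, \dist') \leq \alpha$ and $\KL(\dist', \dist_\test) \leq r$.

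\textbf{Step 1 (LP side).} I would introduce the clean counterfactual samples $Z_1,\dots,Z_n$ IID from $\dist$ that would have been observed absent poisoning, with empirical distribution $\hat\dist_n$. Matching the at least $(1-\alpha)n$ unperturbed atoms to themselves and paying cost $1$ on each altered atom gives $\LP_\cN(\dist_n,\hat\dist_n) \leq \alpha$ directly from \eqref{eq: LP-def}. Next, realize $\dist_\test$ as a pushforward of $\dist$ under a stochastic evasion kernel whose displacements lie in a compact set $\mathcal K \subset \interior(\cN)$, and let $\tilde Z_i$ be the image of $Z_i$. Then $\tilde Z_1,\dots,\tilde Z_n$ are IID from $\dist_\test$, and pairing $Z_i \leftrightarrow \tilde Z_i$ achieves $\LP_\cN(\hat\dist_n,\hat\dist_n^\test) = 0$ for the empirical measure $\hat\dist_n^\test \defeq \tfrac{1}{n}\sum_i \delta_{\tilde Z_i}$. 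Gluing these two couplings through the discrete middle measure $\hat\dist_n$ then yields $\LP_\cN(\dist_n, \hat\dist_n^\test) \leq \alpha$.

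\textbf{Step 2 (KL side).} Since $\hat\dist_n^\test$ is purely atomic while $\dist_\test$ may be continuous, $\KL(\hat\dist_n^\test, \dist_\test)$ is typically infinite and $\hat\dist_n^\test$ cannot serve as $\dist'$ directly. I would exploit the strict inclusion $\mathcal K \subset \interior(\cN)$ to gain slack: pick $\epsilon>0$ with $\mathcal K + B_\epsilon \subset \cN$ and take $\dist'$ to be the convolution of $\hat\dist_n^\test$ with a smooth kernel of bandwidth $\epsilon$. Each atom is then displaced by at most $\epsilon$, which preserves $\LP_\cN(\dist_n, \dist') \leq \alpha$ via the coupling from Step~1, and makes $\dist'$ absolutely continuous with respect to a reference measure dominating $\dist_\test$. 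A non-asymptotic Sanov-type inequality — obtained either by discretizing the compact support $\mathcal Z$ on a finite $\epsilon$-net and applying classical Sanov to the resulting histogram, or via the Donsker--Varadhan variational formula on bounded test functions over $\mathcal Z$ — then gives $\Prob(\KL(\dist', \dist_\test) > r) \leq e^{-rn + O(1)}$, where the $O(1)$ absorbs the covering number of $\mathcal Z$ and constants depending on $\epsilon$ but is independent of $n$ and of $\Theta$.

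The principal obstacle is Step~2: reconciling an atomic empirical measure with a potentially continuous target under the KL divergence without spoiling the clean $e^{-rn + O(1)}$ rate. The hypothesis that evasion is confined to the \emph{interior} of $\cN$ is precisely the slack that permits the $\epsilon$-smoothing, and the compactness of the support $\mathcal Z$ is what keeps the discretization/covering constants finite and, crucially, independent of the parameter class $\Theta$. By contrast, the Lévy--Prokhorov half of the argument is essentially a bookkeeping exercise in composing couplings through the discrete counterfactual empirical measure $\hat\dist_n$.
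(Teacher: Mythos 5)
Your overall architecture is sound, and your Step 1 is a legitimate (and arguably cleaner) alternative to part of the paper's machinery: by corrupting the clean counterfactual samples $Z_i$ directly into $\tilde Z_i\sim\dist_\test$, you sidestep the paper's need to commute the KL ball past the evasion LP ball (its Lemma \ref{lemma:support_functions}) and to compose LP balls (its Lemma \ref{lemma:composition}); the uniformity over $\Theta$ via a model-independent containment event is also exactly how the paper argues. The gap is in Step 2, and it is genuine: you take the witness $\dist'$ to be the convolution $\hat\dist_n^\test * K_\epsilon$ and claim $\Prob(\KL(\dist',\dist_\test)>r)\le e^{-rn+O(1)}$. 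Finiteness of $\KL(\dist',\dist_\test)$ requires $\dist'\ll\dist_\test$, not (as you write) that both be dominated by a common reference measure. The theorem makes no regularity assumption on $\dist$, so $\dist_\test$ may be discrete, have atoms, or live on a lower-dimensional subset of $\mathcal Z$; in all those cases the smoothed empirical is \emph{not} absolutely continuous with respect to $\dist_\test$ and $\KL(\dist',\dist_\test)=+\infty$ with probability one, so no Sanov-type bound can rescue this particular witness. Moreover, the two routes you offer do not bound the quantity you need even in the absolutely continuous case: a histogram Sanov bound controls the KL between discretized measures, and discretization only \emph{decreases} KL (data processing), so it does not upper-bound $\KL(\hat\dist_n^\test * K_\epsilon,\dist_\test)$; it would also give a prefactor $(n+1)^{O(1)}$, i.e.\ $e^{-rn+O(\log n)}$ rather than $e^{-rn+O(1)}$.

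The repair is to choose the witness so that absolute continuity with respect to $\dist_\test$ is automatic. One constructive option consistent with your plan: partition $\mathcal Z$ into cells of diameter at most $\epsilon$ and let $\dist'$ reweight $\dist_\test$ cell-by-cell to match the empirical cell frequencies of $\hat\dist_n^\test$; then $\KL(\dist',\dist_\test)$ equals a finite-alphabet KL to which Sanov applies, and $\dist'$ is LP-close to $\hat\dist_n^\test$ with transport displacements bounded by the cell diameter, which your interior slack $\mathcal K+B_\epsilon\subseteq\cN$ absorbs exactly as you intend. The paper instead invokes a non-constructive inflated-Sanov result (its Lemma \ref{lemma:finite_sample}, from \citet{dembo2009large}): with probability at least $1-(4/\delta)^{m(\mathcal Z,\delta)}e^{-rn}$ there exists \emph{some} $\dist'$ within Prokhorov distance $\delta$ of the clean empirical satisfying $\KL(\dist',\dist)\le r$, with a constant independent of $n$ (hence truly $O(1)$ in the exponent); the slack $\delta$ is then charged to both the LP budget ($\alpha'+\delta\le\alpha$) and the transport set ($\cN'+B(0,\delta)\subseteq\cN$). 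Either way, the existential over the witness---rather than a fixed kernel smoothing---is the missing ingredient in your Step 2.
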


Denote here the \acl{HR} solution as $\theta^\HR_{\dist_n} \in \argmin_{\theta\in \Theta} \max_{\dist_\test'\in \mathcal{U}_{\cN,\alpha,r}(\dist_n) }  \E_{\dist_\test'}[\loss(\theta,Z)].$
The previous theorem certifies that asymptotically the testing error is not larger than the \ac{HR} loss with high probability, i.e.,
\[
  \Prob\!\left(\cost_\HR ^{\cN,\alpha,r}(\theta^\HR_{\dist_n}) \geq \cost_\test(\theta^\HR_{\dist_n}) \right)\geq 1-e^{-r n+O(1)}.
\]
In Appendix \ref{app:holistic-robust-loss} we prove a finite sample extension of this generalization certificate (Theorem \ref{theorem:main:finite}) and indicate that the results is statistically tight (Theorem \ref{theorem:main:tight}).
We point out that, unlike certificates founds in several prior works (for instance in \citet{sinha2017certifying}), our bound does not depend on the dimension of the parameter space $\Theta$ but rather only on the ``size'' of the event set $\mathcal Z$ (eg images and labels).
This is critical as in the context of neural networks typically the number of weights is taken much larger than the number of data points $n$.
The \ac{HR} loss is a natural generalization of the \ac{AT} loss. When no robustness against statistical error ($r=0$) and poisoning corruption ($\alpha=0$) is desired, we have indeed
\(
\cost_\HR ^{\cN,0,0}(\theta) = \cost_\adv(\theta);
\)
See Appendix \ref{sec:HR-appendix} for further details.


\section{HR training algorithm}

We now introduce an algorithm---HR training---to optimize efficiently the \ac{HR} loss $\cost_\HR ^{\cN,\alpha,r}$.
At first glance, minimizing the \ac{HR} loss seems challenging as it requires the solution of a saddle point problem over the model parameters $\theta\in \Theta$ and potentially continuous distribution $\dist_\test' \in \mathcal{U}_{\cN,\alpha,r}(\dist_n)$.
We exploit, however, a finite reformulation of the \ac{HR} loss by \citet{bennouna2022holistic}.
For any given training data (sub)set $\{z_i=(x_i,y_i)\}_{i\in I}$ with $I \subseteq [n]$, we can compute its associated \ac{HR} loss \textit{exactly} as 

\begin{align}
  \label{eq: HR-finite-formulation}
  \cost_\HR^{\cN,\alpha,r}(\theta) =  &
                                        \left\{\begin{array}{r@{~}l}
      \max & \sum_{i =1}^n d'_{i}\loss^{\cN}(\theta,z_i) + d'_{0} \loss^{\mathcal Z}(\theta) \\[0.5em]
      \st &  d'\in \Re^{n+1}_+, \, \hat d'\in \Re^{n+1}_+,\, s \in \Re^n_+,\\[0.5em]
           &  \sum_{i=0}^n d'_i = \sum_{i=0}^n \hat{d}'_i = 1,\\[0.5em]
           &\hat{d}'_i \!+\! s_i\!=\!\frac{1}{n}~~\forall i \!\in\! [1,\dots,n],\\[0.5em]
           & \sum_{i=0}^n \hat{d}'_i \log ( \frac{\hat{d}_i}{d'_i} ) \leq r,~\sum_{i=1}^n s_i\leq \alpha
    \end{array}\right.
\end{align}


where $\loss^{\mathcal Z}(\theta)\defeq \max_{z\in\mathcal Z} \ell(\theta, z)$ and $\loss^{\cN}(\theta, z) \defeq \max_{\delta \in \cN\!,\,  z+\delta\in \mathcal Z} \loss(\theta,z+\delta)$.
In particular, the supremum of the HR loss \eqref{eq: HR-loss} is attained in distributions $\mathcal{D}'_{test}$ and $\mathcal{D}'$ of finite support ($d'$ and $\hat{d}'$ respectively in \eqref{eq: HR-finite-formulation}), specifically the adversarial examples and a point maximizing the loss.
The \ac{HR} loss is therefore in essence simply an adversarial reweighing of loss terms comprising the \ac{AT} loss and an additional loss term $\loss^{\mathcal Z}(\theta)$.
Determining the \ac{HR} loss exactly requires evaluating the adversarial loss $\loss^{\cN}(\theta, z)$ and 
subsequently 
solving the exponential cone problem \eqref{eq: HR-finite-formulation} with $O(n)$ variables and constraints, which can be done efficiently by off-the-shelf optimization solvers.


Practically, we train the \ac{HR} loss by mimicking standard \acl{AT}.
At each minibatch iteration, we indeed first compute an approximate adversarial example $z'_i \approx z_i + \argmax_{\delta \in \cN, z_i+\delta\in \mathcal Z} \loss(\theta,z_i+\delta)$ for each data point in the minibatch $I \subset [n]$ using the \ac{PGD} algorithm of \citet{madry2018towards}.
With the help of the approximations $\loss^{\cN}(\theta, z_i)\approx \loss(\theta,z'_i)$ as well as $\loss^{\mathcal Z}(\theta) \approx \max_{i\in I} \loss(\theta,z'_i)$ we determine an approximate maximizer $d'^\star$ in problem \eqref{eq: HR-finite-formulation}.
The model parameters are then updated by stepping in the direction of the negative gradient of the \ac{HR} loss. We remark that this gradient can be computed efficiently using Danskin's theorem as a weighted sum of the gradients on the adversarial examples, i.e., $\nabla_\theta \cost_\HR^{\cN,\alpha,r}(\theta;I) = \sum_{i\in I} d'^\star_i\nabla_{\theta}\loss(\theta,z_i') + d'^\star_{0} \nabla_{\theta}\loss(\theta, z'_{\arg \max_{i\in I}\! \loss(\theta,z'_i)})$.
The algorithm is summarized in Algorithm \ref{alg:HR-training} where we remark that \ac{HR} training is different from standard \acl{AT} only in that it requires computing the maximizer $d'^\star$ in \eqref{eq: HR-finite-formulation}.
Practically, \ac{HR} training with \ac{PGD} takes less than 6\% more time than standard \acl{AT} using \ac{PGD} in typical benchmarks.

\begin{algorithm}
 \caption{Holistic Robust Training}
  \textbf{Specification:} Learning rate $\lambda$, number of epochs $T$, mini-batches $B$, minibatch replays $M$, HR parameters ($\cN, \alpha, r)$.
    \textbf{Input:} Data $\{z_i=(x_i, y_i)\}_{i\in[n]}$.
  Initialized $\theta$. \\ 
  {\color{black!50} //iterate $M$ times per batch, for a total of $T$ epochs}\\
  \For{$t\in [T/M]$, ~minibatch $I \in B$,~$\_\in[M]$}{
    
    {\color{black!50} //adversarial attack, e.g., using \acs{PGD}}\\
    Compute $z'_i \!\approx\! z_i+{\argmax_{\delta \in \cN}}\, \loss(\theta, z_i\!+\!\delta)\,\forall i \in I$,\\ 
    {\color{black!50} //reweigh data points}\\
    Compute optimal weights $d'^\star$ in problem \eqref{eq: HR-finite-formulation} ,\\
    {\color{black!50} //update model with gradient descent}\\
    $\theta \leftarrow \theta - \lambda\nabla_{\theta} \cost_\HR^{\cN,\alpha,r}(\theta; I)$
  }
  \textbf{Output:} Model parameters $\theta$ after $T$ epochs.
\label{alg:HR-training}
\end{algorithm}

\section{Experiments}
\label{sec: experiments}

In this section, we investigate the efficacy of \ac{HR} training through extensive experiments on the {MNIST} and {CIFAR-10} datasets\footnote{The code reproducing all experiments is available at: \url{https://github.com/RyanLucas3/HR_Neural_Networks}}.
We consider the crossentropy loss $\ell(\theta, (x,y))=-\sum_{k=1}^K y^{(k)} \log f^{(k)}_{\theta}(x)$, $K=10$, where $f_{\theta}(x)\in\Re^K$ represents the network with weights $\theta$ and softmax output probabilities for covariate $x$ and one-hot encoded class label $y\in\Re^K$.
We present here only the {CIFAR-10} results and leave the {MNIST} experiments, which bear the same insights, to Appendix \ref{app:experiments}.  
We conduct three sets of experiments.
The first set evaluates each type of robustness separately and shows that each parameter provides robustness against a distinct source of overfitting: $\cN$ against data evasion, $\alpha$ against data poisoning and finally $r$ against statistical error.
The second set of experiments shows that HR training does not suffer robust overfitting and also benchmarks \ac{HR} training against \ac{SOTA} evasion robustness methods.
The last set of experiments evaluates robustness against statistical error and both corruptions simultaneously and shows that \ac{HR} training significantly outperforms benchmarks.
In particular, each of these sets of experiments shows the theoretical robustness certificate provided in Theorem \ref{thm: upper-bound} holds empirically.

In these experiments, we train \ac{HR} with \ac{PGD} (see Algorithm \ref{alg:HR-training}) and adopt the classical setting of covariates evasion attacks by choosing $\cN = {B}(0,\epsilon)\times \{0\}$ where depending on the context ${B}(0,\epsilon)$ is here either a Euclidean or infinity norm ball with radius $\epsilon$.

\subsection{Evaluating Robustness}
\label{sec:Evaluating Robustness}

In this set of experiments, we use a ResNet18 architecture for CIFAR-10.
We detail the used
architectures and network characteristics in Appendix \ref{App:
  experimental_setup}.

\textbf{Robustness to Statistical Error $(\epsilon = 0, \alpha = 0, r \geq 0)$.}\label{RSE}
To study the robustness of \ac{HR} to statistical error, we train \ac{HR} with various parameters $r$ and growing data size---associated with decreasing levels of statistical error.
We limit training data to $5\%, 10\%, 25\%$ and $50\%$ of all available training data.
We run 10 trials of \ac{HR} training for each parameter value $r \in \{0, 0.05, 0.1,  0.25\}$ on such random subsampled training data. Figure \ref{fig:single_param} depicts the average and standard deviation of the testing loss across all trials.
The parameter $r$ (reflecting the use of KL distributional robustness) clearly improves performance, and its effect is considerably stronger with lower data sizes which are associated with more significant statistical error. Moreover, variance of the loss with respect to the random subsamples in each trail decreases with larger $r$. These observations are in line with the theory that \ac{KL} distributional robustness provides efficient robustness against statistical error \citep{gotoh2018robust}.
In Appendix \ref{App: single_param} we present more extensive plots. 

\begin{figure}
\centering
\begin{subfigure}
  \centering
  \scalebox{.48}{\begin{tikzpicture}

\definecolor{cornflowerblue}{RGB}{100,149,237}
\definecolor{darkgray176}{RGB}{176,176,176}
\definecolor{darkorange}{RGB}{255,140,0}
\definecolor{green}{RGB}{0,128,0}
\definecolor{lightcoral}{RGB}{240,128,128}
\definecolor{mediumseagreen}{RGB}{60,179,113}

\begin{axis}[
legend cell align={left},
legend style={fill opacity=0.8, draw opacity=1, text opacity=1},
tick align=outside,
tick pos=left,
title={Robustness to statistical error},
x grid style={darkgray176},
xlabel={Data Size},
xmajorgrids,
xmin=-0.15, xmax=3.15,
xtick style={color=black},
xtick={0,1,2,3},
xtick={0,1,2,3},
xtick={0,1,2,3},
xtick={0,1,2,3},
xtick={0,1,2,3},
xticklabels={5\%,10\%,25\%,50\%},
xticklabels={5\%,10\%,25\%,50\%},
xticklabels={5\%,10\%,25\%,50\%},
xticklabels={5\%,10\%,25\%,50\%},
xticklabels={5\%,10\%,25\%,50\%},
y grid style={darkgray176},
ylabel={Test Loss},
ymajorgrids,
ymin=0.227485875936704, ymax=4.2488957281123,
ytick style={color=black},
ytick={0,0.5,1,1.5,2,2.5,3,3.5,4,4.5},
yticklabels={
  \(\displaystyle {0.0}\),
  \(\displaystyle {0.5}\),
  \(\displaystyle {1.0}\),
  \(\displaystyle {1.5}\),
  \(\displaystyle {2.0}\),
  \(\displaystyle {2.5}\),
  \(\displaystyle {3.0}\),
  \(\displaystyle {3.5}\),
  \(\displaystyle {4.0}\),
  \(\displaystyle {4.5}\)
}
]
\path [draw=red, fill=red, opacity=0.4]
(axis cs:0,2.3703170959342)
--(axis cs:0,2.16221748658534)
--(axis cs:1,1.53071819312287)
--(axis cs:2,0.878886857520971)
--(axis cs:3,0.580555455191598)
--(axis cs:3,0.686954413735404)
--(axis cs:3,0.686954413735404)
--(axis cs:2,1.00886874459084)
--(axis cs:1,1.69101733956146)
--(axis cs:0,2.3703170959342)
--cycle;

\path [draw=green, fill=green, opacity=0.4]
(axis cs:0,2.19184004668349)
--(axis cs:0,1.84270197495347)
--(axis cs:1,1.35197395792523)
--(axis cs:2,0.753170320597491)
--(axis cs:3,0.506421469236509)
--(axis cs:3,0.694295439542635)
--(axis cs:3,0.694295439542635)
--(axis cs:2,0.909244413823285)
--(axis cs:1,1.539165706534)
--(axis cs:0,2.19184004668349)
--cycle;

\path [draw=darkorange, fill=darkorange, opacity=0.4]
(axis cs:0,1.77958950159278)
--(axis cs:0,1.49223567022119)
--(axis cs:1,1.13159347998093)
--(axis cs:2,0.616548670836025)
--(axis cs:3,0.410277232853777)
--(axis cs:3,0.596136137049788)
--(axis cs:3,0.596136137049788)
--(axis cs:2,0.720498548974461)
--(axis cs:1,1.25069632104446)
--(axis cs:0,1.77958950159278)
--cycle;

\path [draw=cornflowerblue, fill=cornflowerblue, opacity=0.4]
(axis cs:0,4.06610437119523)
--(axis cs:0,3.39317776228865)
--(axis cs:1,2.31367603844091)
--(axis cs:2,1.39113732211158)
--(axis cs:3,1.02403629811486)
--(axis cs:3,1.16856177454749)
--(axis cs:3,1.16856177454749)
--(axis cs:2,1.63710775776818)
--(axis cs:1,2.7540183641775)
--(axis cs:0,4.06610437119523)
--cycle;

\addplot [ultra thick, cornflowerblue, mark=*, mark size=2, mark options={solid}]
table {%
0 3.72964106674194
1 2.5338472013092
2 1.51412253993988
3 1.09629903633118
};
\addlegendentry{$r = 0$ (ERM)}
\addplot [ultra thick, lightcoral, mark=*, mark size=2, mark options={solid}]
table {%
0 2.26626729125977
1 1.61086776634216
2 0.943877801055908
3 0.633754934463501
};
\addlegendentry{$r = 0.05$}
\addplot [ultra thick, mediumseagreen, mark=*, mark size=2, mark options={solid}]
table {%
0 2.01727101081848
1 1.44556983222961
2 0.831207367210388
3 0.600358454389572
};
\addlegendentry{$r = 0.1$}
\addplot [ultra thick, darkorange, mark=*, mark size=2, mark options={solid}]
table {%
0 1.63591258590698
1 1.1911449005127
2 0.668523609905243
3 0.503206684951782
};
\addlegendentry{$r = 0.2$}
\end{axis}

\end{tikzpicture}

\end{subfigure}%
\begin{subfigure}
  \centering
  \scalebox{.48}{\begin{tikzpicture}

\definecolor{cornflowerblue}{RGB}{100,149,237}
\definecolor{darkgray176}{RGB}{176,176,176}
\definecolor{darkorange}{RGB}{255,140,0}
\definecolor{lightcoral}{RGB}{240,128,128}
\definecolor{mediumseagreen}{RGB}{60,179,113}

\begin{axis}[
legend cell align={left},
legend style={fill opacity=0.8, draw opacity=1, text opacity=1},
tick align=outside,
tick pos=left,
title={Robustness to data poisoning},
x grid style={darkgray176},
xlabel={Misspecified Labels},
xmajorgrids,
xmin=-0.1, xmax=2.1,
xtick style={color=black},
xtick={0,1,2},
xtick={0,1,2},
xtick={0,1,2},
xtick={0,1,2},
xtick={0,1,2},
xticklabels={25\%,10\%,5\%},
xticklabels={25\%,10\%,5\%},
xticklabels={25\%,10\%,5\%},
xticklabels={25\%,10\%,5\%},
xticklabels={25\%,10\%,5\%},
y grid style={darkgray176},
ylabel={Test Loss},
ymajorgrids,
ymin=0.396381171354614, ymax=1.29013965828573,
ytick style={color=black},
ytick={0.3,0.4,0.5,0.6,0.7,0.8,0.9,1,1.1,1.2,1.3},
yticklabels={
  \(\displaystyle {0.3}\),
  \(\displaystyle {0.4}\),
  \(\displaystyle {0.5}\),
  \(\displaystyle {0.6}\),
  \(\displaystyle {0.7}\),
  \(\displaystyle {0.8}\),
  \(\displaystyle {0.9}\),
  \(\displaystyle {1.0}\),
  \(\displaystyle {1.1}\),
  \(\displaystyle {1.2}\),
  \(\displaystyle {1.3}\)
}
]
\path [draw=cornflowerblue, fill=cornflowerblue, opacity=0.3]
(axis cs:0,1.21027701906799)
--(axis cs:0,1.10346971509339)
--(axis cs:1,0.763403737676062)
--(axis cs:2,0.63670330951037)
--(axis cs:2,0.740590734602789)
--(axis cs:2,0.740590734602789)
--(axis cs:1,0.843300714380823)
--(axis cs:0,1.21027701906799)
--cycle;

\path [draw=lightcoral, fill=lightcoral, opacity=0.3]
(axis cs:0,0.949472398143599)
--(axis cs:0,0.875513246845415)
--(axis cs:1,0.553594234142281)
--(axis cs:2,0.474850936058904)
--(axis cs:2,0.506180474997615)
--(axis cs:2,0.506180474997615)
--(axis cs:1,0.625505540637992)
--(axis cs:0,0.949472398143599)
--cycle;

\path [draw=mediumseagreen, fill=mediumseagreen, opacity=0.3]
(axis cs:0,0.940946012775083)
--(axis cs:0,0.846832828625063)
--(axis cs:1,0.550631192824535)
--(axis cs:2,0.43700655712421)
--(axis cs:2,0.46884377956097)
--(axis cs:2,0.46884377956097)
--(axis cs:1,0.609229824631519)
--(axis cs:0,0.940946012775083)
--cycle;

\path [draw=darkorange, fill=darkorange, opacity=0.3]
(axis cs:0,1.24951427251613)
--(axis cs:0,0.856460672247053)
--(axis cs:1,0.597805876172632)
--(axis cs:2,0.480436497741837)
--(axis cs:2,0.512588231567245)
--(axis cs:2,0.512588231567245)
--(axis cs:1,0.667996573473364)
--(axis cs:0,1.24951427251613)
--cycle;

\addplot [ultra thick, cornflowerblue, mark=*, mark size=2, mark options={solid}]
table {%
0 1.15687336708069
1 0.803352226028442
2 0.68864702205658
};
\addlegendentry{$\alpha = 0$ (ERM)}
\addplot [ultra thick, lightcoral, mark=*, mark size=2, mark options={solid}]
table {%
0 0.912492822494507
1 0.589549887390137
2 0.490515705528259
};
\addlegendentry{$\alpha = 0.05$}
\addplot [ultra thick, mediumseagreen, mark=*, mark size=2, mark options={solid}]
table {%
0 0.893889420700073
1 0.579930508728027
2 0.45292516834259
};
\addlegendentry{$\alpha = 0.1$}
\addplot [ultra thick, darkorange, mark=*, mark size=2, mark options={solid}]
table {%
0 1.05298747238159
1 0.632901224822998
2 0.496512364654541
};
\addlegendentry{$\alpha = 0.2$}
\end{axis}

\end{tikzpicture}
\end{subfigure}%
\caption{Robustness to statistical error / poisoning on CIFAR-10. }
\label{fig:single_param}
\end{figure}
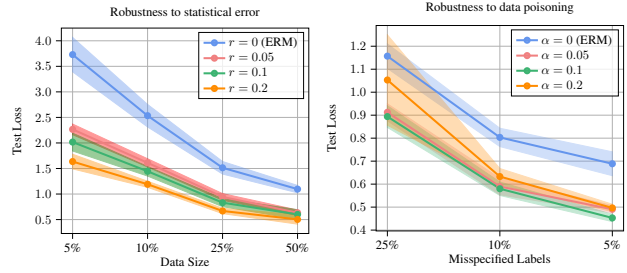


\textbf{Robustness to Poisoning $(\epsilon = 0, \alpha \geq 0, r = 0)$.}\label{RM}
To study HR robustness to poisoning, we evaluate HR training under label-flipping attacks.
We select $0\%, 5\%, 10\%, 25\%$ of training images randomly and flip their labels uniformly at random to a different label.
We then run 10 trials of HR training for each parameter value $\alpha \in \{0, 0.05, 0.1, 0.2\}$ on the resulting randomly corrupted data. Figure \ref{fig:single_param} present the average and standard deviation of the testing loss across all trials. HR with parameter $\alpha$ clearly improves performance when learning under poisoning corruption and its improvement is more significant with larger corruption levels. When $\alpha$ is chosen too large, the learned model is naturally overly conservative which decreases performance. We present further detailed plots of the results in Appendix \ref{App: single_param}, showing that the learned model indeed provides a robustness certificate as the HR loss is an upper bound on the testing loss whenever $\alpha$ is chosen roughly larger or equal to the corruption level.
\vspace{4em}

\textbf{Robustness to Evasion $(\epsilon \geq 0, \alpha = 0, r=0)$}\label{RN}
When $\alpha = 0$ and $r=0$, the HR loss becomes the classical AT loss, and HR training with PGD (Algorithm \ref{alg:HR-training}) becomes the PGD algorithm of \citet{madry2018towards} which has been well documented to provide robustness against evasion attacks.

\subsection{Combatting robust overfitting }\label{rob_of}

We replicate here the experiment of \citet{rice2020overfitting} exhibiting robust overfitting for \ac{AT} training.
As in \citet{rice2020overfitting} we use preactivation ResNet18 on CIFAR-10 with PGD attacks of 10 steps, and with SGD learning rate decay at epochs 100 and 150. Further experimental setup is discussed in Appendix \ref{App: experimental_setup}. We train \ac{AT} with PGD of \citet{madry2018towards}, TRADES of \citet{zhang2019theoretically}, and HR with $r\geq 0$, $\epsilon >0$ and $\alpha=0$. For all algorithms, we use the same PGD attack on training with $\epsilon= 8/255$ and 10 attack steps.
Figure \ref{fig:rob_overfitting} shows the test loss and error up to 200 training epochs when using {pre-activation ResNet18 under $\ell_{\infty}$ attacks}. 
Table \ref{tab: rice_tab} reports the test error at the 200 epoch mark, replicating the results reported in Table 2 in \citet{rice2020overfitting}. \ac{AT} clearly exhibits robust overfitting as the test loss drastically increases with more training epochs. The test error also increases but perhaps not as drastically. This indicates that although the number of errors made eventually settles, worryingly, the errors are made with increasing confidence as quantified by the crossentropy loss.
Similar to \citet{rice2020overfitting}, we find that TRADES \cite{zhang2019theoretically} without early stopping also experiences robust overfitting although to a lesser extent than regular PGD training.
\ac{HR} training with modest parameter values $r$ effectively mitigates this phenomenon and results in a monotonically decreasing test loss as a function of training epochs eliminating the need for early stopping.
It is worth noting that PGD best checkpoint performs better than HR final checkpoint. However, it might be practically hard to recover the performance of such best checkpoint with early stopping. In fact, PGD's training curve around the best checkpoint is very steep, and hence, early stopping might result in unstable performance on practical datasets, with potentially lower datasize.

\begin{figure}[h]
\centering
\scalebox{.52}{\input{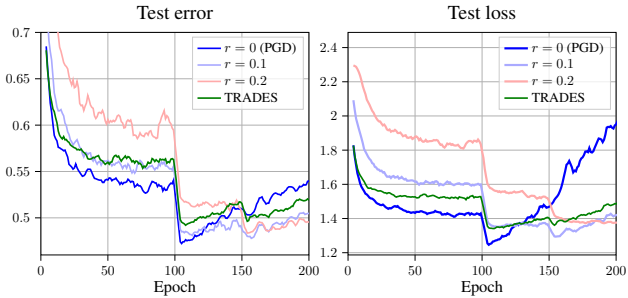}}\caption{Standard AT training versus HR training test loss for evasion with $\epsilon = 8/255$ on CIFAR-10. Robustness to statistical error ($r>0$) reduces robust overfitting experienced by \ac{AT}.}\label{fig:rob_overfitting}
\end{figure}

\begin{table}[h]
\smaller
\begin{tabular}{lllrr}
\toprule
\multicolumn{2}{c}{ \textbf{ROBUST TEST ERROR (\%) }}  &    FINAL &   BEST &   DIFF \\
\midrule
& E. Stopng. &  46.9 & 46.7 &  0.2 \\
\midrule
\textbf{Robust} & \text{plain-PGD}             &  53.9 & 46.8 &  7.2 \\
\textbf{Methods} & PGD+$\ell_1$-Reg.            &  53.0  &  \textbf{46.4} &  4.4 \\
& PGD+$\ell_2$-Reg.               &  55.2  &  48.6 &  8.8 \\
& TRADES            &  52.0  &  48.8 &  3.2 \\
& HR (Ours)             &  \textbf{49.6}  &  47.8 &  \textbf{1.8} \\
\bottomrule
\end{tabular}

	\caption{Replicating Table 2 from \cite{rice2020overfitting}. For each model, the optimal hyperparameters are selected as in \citet{rice2020overfitting}. }
	\label{tab: rice_tab}
\end{table}

\begin{figure}[h]
  \centering
  \includegraphics[width=0.49\textwidth]{./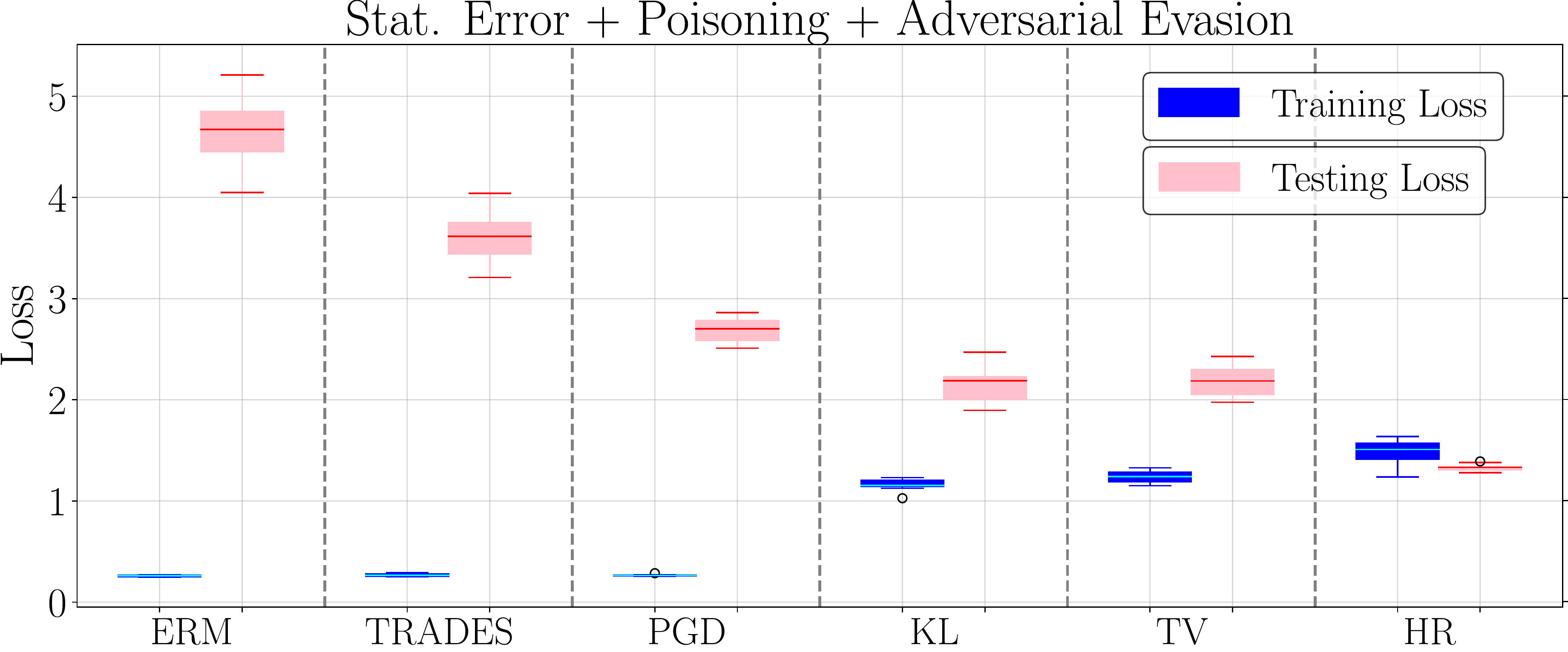}
  \caption{Training and adversarial testing loss of HR and various benchmarks in the presence of statistical error, data poisoning and evasion attacks. Training loss refers to the loss minimized during training while testing loss refers to the loss on the adversarially perturbed test set.}
  \label{fig: validation}
\end{figure}



\begin{table*}[h!]
\smaller
\centering
\begin{tabular}{lcccccc}
\toprule
& {} & \parbox[c]{1.5cm}{Natural \\ (No Attack)} & \parbox[c]{1.1cm}{Evasion} & \parbox[c]{1.6cm}{Stat.Error + Poisoning} & \parbox[c]{1.5cm}{Stat.Error + Evasion} & \parbox[c]{1.6cm}{Stat.Error + Poisoning +  Evasion} \\
\midrule
\textbf{Combined Defenses} & HR        & \textbf{9.0\%}   & \textbf{18.5\%}  & \textbf{25.3\%} (\textcolor{purple}{\textbf{27.1\%}}) & \textbf{33.0\%} (\textcolor{purple}{\textbf{34.4\%}}) & \textbf{36.6\%} (\textcolor{purple}{\textbf{38.9\%}}) \\
& PGD + DPA & 18.6\%            & 27.8\%            & 37.0\% (\textcolor{purple}{38.2\%})                      & 47.1\% (\textcolor{purple}{48.0\%})                        & 45.2\% (\textcolor{purple}{46.1\%}) \\
\midrule
\textbf{AT Defenses} & PGD       & 10.6\%            & 20.2\%            & 30.1\% (\textcolor{purple}{32.4\%})                      & 33.8\% (\textcolor{purple}{35.3\%})                        & 40.6\% (\textcolor{purple}{41.7\%}) \\
& TRADES    & 13.9\%            & 32.8\%            & 32.6\% (\textcolor{purple}{34.3\%})                      & 43.3\% (\textcolor{purple}{46.9\%})                        & 49.7\% (\textcolor{purple}{52.0\%}) \\
\midrule
\textbf{Poisoning Defenses} & DPA       & 15.6\%            & 39.2\%            & 35.4\% (\textcolor{purple}{37.9\%})                      & 50.3\% (\textcolor{purple}{51.4\%})                        & 49.6\% (\textcolor{purple}{51.4\%}) \\
& TV        & 9.8\%             & 40.8\%            & \textbf{25.3\%} (\textcolor{purple}{\textbf{27.1\%}}) & 43.9\% (\textcolor{purple}{50.5\%}) & 50.6\% (\textcolor{purple}{54.4\%}) \\
\midrule
\textbf{No Defense} & ERM       & 9.4\%             & 35.4\%            & 28.8\% (\textcolor{purple}{30.5\%})                      & 43.9\% (\textcolor{purple}{50.5\%})                        & 52.2\% (\textcolor{purple}{55.5\%}) \\
& KL        & \textbf{9.0\%}   & 38.1\%            & 28.4\% (\textcolor{purple}{31.5\%})                      & 43.9\% (\textcolor{purple}{50.5\%})                        & 49.9\% (\textcolor{purple}{54.7\%}) \\
\bottomrule
\end{tabular}
\caption{Mean and (\textcolor{purple}{worst-case}) error rate over 10 trials with model parameters chosen based on a validation set.
}
\label{tab:HR_results_full}
\end{table*}

\subsection{Holistic Robustness }\label{HR}
We finally compare the robustness of HR to benchmarks on datasets in all possible corruption settings: natural data (no corruption), evasion corruption, evasion corruption with subsampling, poisoning corruption with subsampling, simultaneous evasion and poisoning corruption with subsampling. As in Section \ref{sec:Evaluating Robustness}, subsampling allows to measure the effect of statistical error.
For each setting, we test the performance of each algorithm on 10 trials of randomly sampled and corrupted data.
In each trial, when statistical error is considered, we train algorithms with a random 25\% subsample of the full dataset. When poisoning attacks are considered, we further randomly flip  10\% of the training labels. The test loss performance is then evaluated with the help of a separate test set. When evasion is considered, the test set is adversarially perturbed by PGD-10 attacks with an $\ell_2$ adversary of magnitude $\epsilon = 0.1$.




We benchmark HR with the following algorithms: (i) algorithms with no built-in robustness: ERM, (ii) algorithms designed to protect against evasion attacks: \ac{AT} with PGD \cite{madry2018towards} and TRADES \cite{zhang2019theoretically} (iii) algorithms designed to protect against statistical error: \ac{KL} DRO \cite{namkoong2017variance}; a particular case of HR with $\alpha=0, \epsilon=0$ (see Appendix \ref{sec:HR-appendix}), (iv) algorithms designed to protect against poisoning corruption: DPA \cite{wang2022improved} and \ac{TV} DRO; particular case of HR with $r=0, \epsilon=0$ (see again Appendix \ref{sec:HR-appendix}), (v) Combining evasion and poisoning defenses algorithms: combination of PGD and DPA; see Appendix \ref{app:benchmarking} for details.
The hyperparameters of each algorithm are selected based on a 70/30 train/validation split of the training data as detailed in Appendix \ref{app:benchmarking}. 

\textbf{Results.} 
Table \ref{tab:HR_results_full} presents the natural and adversarial classification errors of each algorithm under various corruption settings. HR training significantly surpasses all other benchmarks in \textit{every setting}. Traditional robust algorithms, such as PGD, TRADES, and DPA, unsurprisingly falter when faced with a corruption variant different from what they were originally designed to handle, performing even worse than ERM. In stark contrast, HR training demonstrates its adaptability by automatically adjusting to the specific corruption in the data, consistently maintaining high performance even when faced with combined attack strategies. 
Furthermore, the superior generalization capability of HR enables it to outperform each algorithm even in the unique attack settings for which they were specifically designed. 
Finally, HR also significantly outperforms the combination of defense techniques (PGD+DPA). Indeed, combining these defense techniques does not necessarily yield cumulative benefits due to the potential negative interactions between them. In contrast, HR is designed to provide effective simultaneous defenses while ensuring strong generalization.

Figure \ref{fig: validation} shows the training loss and the adversarial test loss of each algorithm
in the settings of combined evasion and poisoning. Here, DPA and DPA+PGD are not represented a they have no clear notion of loss. HR training significantly outperforms all other benchmarks in two distinct ways.
First, \ac{HR} training enjoys the smallest adversarial test loss. Furthermore, the variance of its adversarial test loss is the smallest among the benchmarked procedures.
Second, \ac{HR} training effectively provides a robustness certificate as alluded to in Theorem \ref{thm: upper-bound}. The training loss is indeed with high probability an upper bound on the testing loss---an observation which fails to hold for the other procedures.


\paragraph{Training time.}
We investigate the train time of HR to evaluate the computational cost of its gain in robustness and generalization.
\autoref{tab:runtime} reports the training time (in minutes) per epoch of ERM, PGD, and HR on various datasets and architectures.
While HR takes four times longer than PGD for small datasets and architectures (ConvNet on MNSIT), its additional computational cost compared to PGD is negligible for large datasets and architectures: \textbf{$<$+6\%} for CIFAR-10 with ResNet and TinyImageNet with ResNet/EfficientNet. This is due to the fact that computationally, HR training only differ with PGD in solving an additional conic optimization problem \eqref{eq: HR-finite-formulation} as described in Algorithm \ref{alg:HR-training}. 
This optimization problem depends only on the batch size (linearly) and is \textit{independent} on the data dimensionality and the network's size/architecture. As a result, while the PGD step and the network gradient step scale with the network's size and data dimensionality, HR's additional step does not scale with these factors, keeping its additional computational burden constant compared to PGD.

\begin{table}[h!]
\centering
\begin{tabular}{@{}lllll@{}}
\toprule
Architecture         & Dataset       & ERM   & PGD   & HR    \\ \midrule
ConvNet       & MNIST         & 0.04  & 0.28  & 1.30  \\
ResNet-18     & CIFAR-10      & 0.24  & 2.60  & 2.75  \\
ResNet-18     & Tiny ImageNet & 3.39  & 6.06  & 6.36  \\ 
EfficientNet  & Tiny ImageNet & 7.65  & 39.59 & 40.87 \\ \bottomrule
\end{tabular}
\caption{Runtime of ERM, PGD, and HR in minutes per epoch (one full pass of the training dataset).}
\label{tab:runtime}
\end{table}



\section{Discussion \& Conclusion}

We make neural network training resilient against adversarial attacks based on a disciplined robust optimization approach. We remark that our approach can be generalized to other corruption models by substituting the \ac{LP} metric with a metric suitable for the considered corruption model (e.g., Wasserstein for perturbations bounded on average). The tractability properties of the resulting method may however depend on the considered metric. We believe hence that the considered approach constitutes an interesting new inroad to bridging generalization and adversarial robustness of modern deep neural networks.


\bibliography{references}
\bibliographystyle{icml2022}

\newpage
\appendix
\onecolumn







\section{Theory of overfitting}\label{App: theory of overfitting}
In this section we present theoretical insights into AT overfitting and its connection to ERM overfitting. Recall the notations of Section \ref{sec: adv and generalization}. Recall that ERM overfitting gap, between training and testing loss, for a data generation process $\samples_n$ can be written as
\begin{align}
\gap_\ERM(\samples_n, \loss) &\!=\!
\E_{\dist_n \sim \samples_n}%
\!\bigg[\!
        \underbrace{\E_{\dist}[\loss({\theta}_{{\dist}_n},\!Z)]}_{\text{test loss}}
        \!-\!
        \underbrace{
        \E_{{\dist}_n}[\loss({\theta}_{{\dist}_n},\!Z)]}_{\text{train loss}}\!\bigg] \!\!=\!\E_{{\dist}_n^1 \sim \samples_n}\E_{{\dist}_n^2 \sim \samples_n} \!\!
    \left[
        \E_{{\dist}_n^2}[\loss({\theta}_{{\dist}_n^1},\!Z)]
    \!-\!
        \E_{{\dist}_n^2}[\loss({\theta}_{{\dist}_n^2},\!Z)]
    \right]\! \geq \! 0. \label{eq: ERM gap}
\end{align}
Here $\dist_n^2$ can be interpreted as the empirical distribution of random test data and $\dist_n^1$ as the empirical distribution of training data which are equal in distribution but independent. The model $\theta_{\dist_n^1}$ is trained in the context of $\dist_n^1$ and tested with the help of $\dist_n^2$ and subsequently compared to its training performance on $\dist_n^2$. Hence, the \ac{ERM} gap is fundamentally due to the distribution difference (statistical error) between the empirical distribution $\dist_n^1$ of the training data  and the empirical distribution $\dist_n^2$ of the test data. 

\begin{proof}[Proof of Theorem \ref{thm: AT gap}]
  First, observe that we have
  \begin{align*}
    \gap_\adv(\samples_n, \loss) 
    \defeq& 
            \E_{{\dist}_n \sim \samples_n}\left[ 
            \E_{\dist({\theta}^\adv_{{\dist}_n})}[\loss({\theta}^\adv_{{\dist}_n},Z)]
            \right]
            -
            \E_{{\dist}_n \sim \samples_n}\left[ 
            \E_{{\dist}_n({\theta}^\adv_{{\dist}_n})}[\loss({\theta}^\adv_{{\dist}_n},Z)]
            \right] \\
    =&
       \E_{{\dist}_n^1 \sim \samples_n}\E_{{\dist}_n^2 \sim \samples_n}
       \left[
       \E_{{\dist}_n^2({\theta}^\adv_{{\dist}_n^1})}[\loss({\theta}^\adv_{{\dist}_n^1},Z)]
       \right]
       -
       \E_{{\dist}_n^1 \sim \samples_n}\E_{{\dist}_n^2 \sim \samples_n}
       \left[
       \E_{{\dist}_n^2({\theta}^\adv_{{\dist}_n^2})}[\loss({\theta}^\adv_{{\dist}_n^2},Z)]
       \right] \\
    =&
       \E_{{\dist}_n^1 \sim \samples_n}\E_{{\dist}_n^2 \sim \samples_n}
       \left[
       \E_{{\dist}_n^2({\theta}^\adv_{{\dist}_n^1})}[\loss({\theta}^\adv_{{\dist}_n^1},Z)]
       -
       \E_{{\dist}_n^2({\theta}^\adv_{{\dist}_n^2})}[\loss({\theta}^\adv_{{\dist}_n^1},Z)]
       \right]\\
          &\qquad+
            \E_{{\dist}_n^1 \sim \samples_n}\E_{{\dist}_n^2 \sim \samples_n}
            \left[
            \E_{{\dist}_n^2({\theta}^\adv_{{\dist}_n^2})}[\loss({\theta}^\adv_{{\dist}_n^1},Z)]
            -
            \E_{{\dist}_n^2({\theta}^\adv_{{\dist}_n^2})}[\loss({\theta}^\adv_{{\dist}_n^2},Z)]
            \right]
  \end{align*}
  Here the first equality uses Fubini's theorem with $\E_{\dist_n\sim\samples_n}[\dist_n]=\dist$ which applies as $\E_{\dist}[\ell^\cN(\theta, z)]$ is assumed finite for any $\theta\in \Theta$ and we have
  \(
  \E_{\dist(\theta)}[\ell(\theta, z)] = \E_{\dist}[\ell^\cN(\theta, z)] = \E_{\dist_n\sim\samples_n}[\E_{\dist_n}[\ell^\cN(\theta, z)]] = \E_{\dist_n\sim\samples_n}[\E_{\dist_n(\theta)}[\ell(\theta, z)]]
  \)
  for every $\theta\in\Theta$ as well. The second equality follows by adding and subtracting $\E_{{\dist}_n^1 \sim \samples_n}\E_{{\dist}_n^2 \sim \samples_n} [\E_{{\dist}_n^2({\theta}^\adv_{{\dist}_n^2})}[\loss({\theta}^\adv_{{\dist}_n^1},Z)]]$.

  Using the notations ${\dist}_n^1({\theta}^\adv_{{\dist}_n^1}) = \dist_n^{1,\adv}\sim\samples^\adv_n$ and ${\dist}_n^2({\theta}^\adv_{{\dist}_n^2}) = \dist_n^{2,\adv}\sim\samples^\adv_n$ we get
  \begin{align*}
    \gap_\adv(\samples_n, \loss) 
    =&  \E_{{\dist}_n^1 \sim \samples_n}\E_{{\dist}_n^2 \sim \samples_n}
       \left[
       \E_{{\dist}_n^2({\theta}^\adv_{{\dist}_n^1})}[\loss({\theta}^\adv_{{\dist}_n^1},Z)]
       -
       \E_{\dist_n^{2,\adv}}[\loss({\theta}^\adv_{{\dist}_n^1},Z)]
       \right]\\
     &\qquad+
       \E_{\dist_n^{1,\adv}\sim\samples^\adv_n}\E_{\dist_n^{2,\adv}\sim\samples^\adv_n}
       \left[
       \E_{\dist_n^{2,\adv}}[\loss({\theta}_{{\dist}_n^{1,\adv}},Z)]
       -
       \E_{\dist_n^{2,\adv}}[\loss({\theta}_{{\dist}_n^{2,\adv}},Z)]
       \right]\\
    =&  \E_{{\dist}_n^1 \sim \samples_n}\E_{{\dist}_n^2 \sim \samples_n}
       \left[
       \E_{{\dist}_n^2({\theta}^\adv_{{\dist}_n^1})}[\loss({\theta}^\adv_{{\dist}_n^1},Z)]
       -
       \E_{\dist_n^{2,\adv}}[\loss({\theta}^\adv_{{\dist}_n^1},Z)]
       \right] + \gap_\ERM(\samples^\adv_n, \loss)
  \end{align*}
  where we exploit that $\theta^\adv_{\dist'} = \theta_{\dist'^{\adv}}$ for all $\dist'$ due to saddle point condition in Assumption \ref{assumption:saddle-point}.
\end{proof}


\section{Further insights on the holistic robust loss}\label{sec:HR-appendix}
In this section, we detail further insights on the \ac{HR} loss.
Recall that the \ac{HR} loss is defined for all $\theta \in \Theta$ as
\(
  \cost_\HR ^{\cN,\alpha,r}(\theta) \!\defeq\!\max \{ \E_{\dist_\test'}[\loss(\theta,Z)] : \dist_\test'\! \in \mathcal{U}_{\cN,\alpha,r}(\dist_n) \}
\)
with ambiguity set
\[
\mathcal{U}_{\cN,\alpha,r}(\dist_n) \defeq \{ \dist_\test' : \exists \dist' ~\st~ \LP_{\cN}(\dist_n, \dist') \leq \alpha, \, \KL(\dist',\dist_\test') \leq r\}
\]
with $\KL$ denoting here the \acl{KL} divergence $\KL(\dist',\dist_\test') = \int \log(\tfrac{\d\dist'}{\d \dist_\test'})(z) d\dist'(z)$ and \ac{LP} the optimal transport metric $\LP_\cN(\dist_n, \dist') \defeq \inf \{ \int \mathbf{1}(z'-z \not \in \cN) \, \d\gamma(z,z') \; : \; \gamma \in \Gamma(\dist_n,\dist')\}$ where $\Gamma(\dist_n,\dist')$ denotes set of all couplings on $\mathcal Z\times\mathcal Z$ between $\dist_n$ and $\dist'$ \citep{villani2009optimal}.
The \ac{HR} loss on an arbitrary collection of data points $\{z_i=(x_i,y_i)\}_{i=1}^n$ admits (\citet{bennouna2022holistic}, Theorem 3.5) the tractable convex reformulation 
\begin{align}
  \label{eq: HR-finite-formulation-app}
  \cost_\HR^{\cN,\alpha,r}(\theta) =  &
                                        \left\{\begin{array}{r@{~}l}
      \max & \sum_{i =1}^n d'_{i}\loss^{\cN}(\theta,z_i) + d'_{0} \loss^{\mathcal Z}(\theta) \\[0.5em]
      \st &  d'\in \Re^{n+1}_+, \, \hat d'\in \Re^{n+1}_+,\, s^n \in \Re^n_+,\\[0.5em]
           &  \sum_{i=0}^n d'_i = \sum_{i=0}^n \hat{d}'_i = 1,\, \hat{d}'_i \!+\! s_i\!=\!\frac{1}{n}~~\forall i \!\in\! [1,\dots,n],\\[0.5em]
           & \sum_{i=0}^n \hat{d}'_i \log ( \tfrac{\hat{d}_i}{d'_i} ) \leq r,~\sum_{i=1}^n s_i\leq \alpha
    \end{array}\right.
\end{align}
where we recall $\loss^\cN(\theta,z) \defeq \sup_{\delta \in \cN, z+\delta\in \mathcal Z} \loss(\theta,z+\delta)$ for all $z \in \mathcal{Z}$ and $\loss^{\mathcal Z}(\theta)\defeq \sup_{z\in \mathcal Z} \loss(\theta,z)$ for all $\theta \in \Theta$. Recall that the standard \ac{AT} loss is
\(
  \cost_\adv(\theta) = \sum_{i=1}^n d'_{i}\loss^{\cN}(\theta,z_i)
\)
The \ac{HR} loss is hence in essence a smart adversarial weighing of the classical \ac{AT} loss terms but also includes an additional loss term $\loss^{\mathcal Z}(\theta)$. 

In what remains of this section we discuss several interesting special cases of our \ac{HR} loss functions.

\textbf{Robustness to Evasion $(\cN \neq \{0\}, \alpha = 0, r=0)$.} The set $\cN$ characterizes the desired robustness to evasion attacks. Indeed, notice that in the special case of $\alpha = 0$ and $r=0$, we have $\mathcal{U}_{\cN,0,0}(\dist_n) = \{ \dist_\test' \; : \; \LP_{\cN}(\dist_n, \dist_\test') \leq 0\} = \{ \dist_\test' \; : \; \exists \gamma \in \Gamma(\dist_n,\dist_\test')  ~\st~ \int\mathbf{1}(z'-z \not \in \cN) \, \d\gamma(z,z') = 0\}$. Hence,
\begin{equation*}
  \cost_\HR ^{\cN,0,0}(\theta)
  = \int \left[\max_{\delta \in \cN, z+\delta\in \mathcal Z}\loss(\theta, z+\delta)\right] \, \d \dist_n(z)
  =  \E_{\dist_n}[\ell^\cN(\theta, Z)]=
  \cost^\adv(\theta),
\end{equation*}
which is precisely the classical adversarial training loss proposed by \cite{madry2018towards}. Recall that here $Z=(X,Y)$ represents both the covariate and labels. In practice, a common choice for the set $\cN$ is $\{\delta \, : \, \| \delta\|_p\leq \epsilon \} \times \{0\}$ which allows for covariate noise bounded in an $\ell_p$ ball.

\textbf{Robustness to Poisoning $(\cN = \{0\}, \alpha > 0, r=0)$.} The parameter $\alpha$ sets the desired robustness to poisoning attacks when less than $\alpha n$ samples are tampered with by the adversary. Notice that with $\cN = \{0\}$ and $r=0$, we have $\mathcal{U}_{\{0\},\alpha,0}(\dist_n) = \{ \dist_\test' \; : \; \LP_{\{0\}}(\dist_n, \dist_\test') \leq \alpha\} = \{ \dist_\test' \; : \; \TV(\dist_n, \dist_\test') \leq \alpha\}$ where $\TV$ is the total variation distance associated with the optimal transport cost $c(z,z') = \mathbf{1}(z\neq z')$. Hence, 
\[
  \cost_\HR^{\{0\},\alpha,0}(\theta)
  = 
  \max \{ \E_{\dist_\test'}[\loss(\theta,Z)] : \TV(\dist_n,\dist_\test') \leq \alpha \}.
\]

\textbf{Robustness to Statistical Error $(\cN = \{0\}, \alpha = 0, r > 0)$.}
The parameter $r$ sets the desired robustness to statistical error, guaranteeing that the computed loss is an upper bound on the out-of-sample loss with probability $1-e^{-rn + O(1)}$. When $\cN = \{0\}$ and $\alpha = 0$, the \ac{HR} loss becomes
\begin{equation*}
  \cost_\HR^{\{0\},0,r}(\theta)
  = 
  \max \{ \E_{\dist_\test'}[\loss(\theta,Z)] :\; \KL(\dist_n,\dist_\test') \leq r \},
\end{equation*}
which is classical robust formulation extensively considered by \citet{lam2019recovering,namkoong2017variance,vanparys2021data,bennouna2021learning}.

\textbf{Robustness to Evasion and Statistical Error $(\cN \neq \{0\}, \alpha = 0, r>0)$.} The \ac{HR} loss with $\alpha=0$ protects against evasion attacks while ensuring strong generalization. The \ac{HR} loss hence combats the robust overfitting phenomenon of \ac{AT} with KL distributional robustness. In this case, the \ac{HR} loss reduces to 
\[
  \cost_\HR^{\cN,0,r}(\theta)
  = 
  \max \left\{ \E_{\dist_\test'}\left[\loss^\cN(\theta,Z)\right] \; : \; 
  \KL(\dist_n,\dist_\test') \leq r \right\}.
\]

\textbf{Robustness to Poisoning and Statistical Error $(\cN = \{0\}, \alpha > 0, r >0)$.} The \ac{HR} loss with $\cN=\{0\}$ protects against poisoning attacks while ensuring strong generalization. In this case, the \ac{HR} loss becomes 
\begin{equation}\label{eq: TVKL}
  \cost_\HR^{\{0\},\alpha,r}(\theta)
  = 
  \max \{ \E_{\dist_\test'}[\loss(\theta,Z)] : \exists \dist' ~\st~ \TV(\dist_n,\dist')\leq \alpha, \; \KL(\dist',\dist'_\test) \leq r   \}.
\end{equation}

\section{Proof of the robustness certificate of the holistic robust loss}
\label{app:holistic-robust-loss}

In this section, we prove formally Theorem \ref{thm: upper-bound} and its finite sample version Theorem \ref{theorem:main:finite}.
The goal is to show that the HR training loss $\cost_{\HR} ^{\cN,\alpha,r}(\theta^\HR_{\dist_n})$ is an upper bound on the test loss $\cost_\test(\theta^\HR_{\dist_n})$ with high probability.

\begin{proof}[Proof of Theorem \ref{thm: upper-bound}]

  Let us denote with $\alpha'=\alpha-\delta_1$ and $\delta_1>0$ the fraction of the training data corrupted by poisoning and $0\in \cN'\subseteq \interior(\cN)$ the compact evasion attack set.
  Define
  \(
    \delta_2 \defeq \min_{n\in \mathcal Z, n'\in \mathcal Z} \{\|n-n'\| \, : \, n\not\in \interior(\cN), \, n'\in \cN'\}.
  \)
  We remark that as the feasible region is compact and the objective function continuous the minimum is achieved at some $n^\star\neq n'^\star$ and hence $\delta_2 = \|\delta-\delta'\| >0$.
  Set now $\delta'=\min(\delta_1,\delta_2)>0$ and note that $\alpha'+\delta'\leq \alpha$ and $\cN'+B(0, \delta')\subseteq \cN$.
  Theorem \ref{theorem:main:finite} hence guarantees that
  \[
    \Prob\!\left(\cost_{\HR} ^{\cN,\alpha,r}(\theta)\geq \cost_\test(\theta),~\forall\theta\in\Theta \right)\geq 1-\left(\tfrac{4}{\delta'}\right)^{m(\mathcal Z, \delta')}\exp(-rn)
  \]
  for all $n\geq 1$ where $m(\mathcal Z, \delta')=\min \{k\geq 0 \, : \, \exists z_1,\dots, z_k\in \mathcal Z ~\st~\cup_{i=1}^k B(z_i,\delta')\supseteq \mathcal Z\}$ denotes the internal covering number of the support set $\mathcal Z$ with balls of radius $\delta'$.
  Hence, in particular we have for all $n\geq 1$ the bound
  \[
    \Prob\!\left(\cost_{\HR} ^{\cN,\alpha,r}({\theta}^{\HR}_{{\dist}_n})\geq \cost_\test({\theta}^{\HR}_{{\dist}_n}) \right)\geq 1-\left(\tfrac{4}{\delta'}\right)^{m(\mathcal Z, \delta')}\exp(-rn)
  \]
  from which the claim follows immediately.
\end{proof}

To prove the main finite sample result in Theorem \ref{theorem:main:finite} the following composition lemma is useful.

\begin{lemma}
  \label{lemma:composition}
  Let $\dist$ and $\dist'$ be two distributions supported on $\mathcal Z$ and suppose $0\in\cN_1$ and $0\in\cN_2$.
  We have
  \begin{equation*}
    \{ \dist'' \; : \; \exists \dist' ~ \st~  \LP_{\cN_1}(\dist,\dist') \leq \alpha_1, \; \LP_{\cN_2}(\dist',\dist'')\leq \alpha_2 \} = \{ \dist'' \; : \; \LP_{\cN_1+\cN_2}(\dist,\dist'') \leq \alpha_1+\alpha_2 \}
  \end{equation*}
  where here $\cN_1+\cN_2$ denotes the Minkowski sum of $\cN_1$ and $\cN_2$.
\end{lemma}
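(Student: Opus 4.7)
\textbf{Plan for Lemma \ref{lemma:composition}.} The plan is to prove both inclusions separately, viewing the Lévy--Prokhorov cost as the minimum-mass failure probability of an optimal transport problem with indicator cost.

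\textbf{Forward inclusion ($\subseteq$).} Given $\dist'$ with $\gamma_1 \in \Gamma(\dist,\dist')$ certifying $\LP_{\cN_1}(\dist,\dist')\le \alpha_1$ and $\gamma_2 \in \Gamma(\dist',\dist'')$ certifying $\LP_{\cN_2}(\dist',\dist'')\le \alpha_2$, I would invoke the standard gluing lemma (e.g., Villani, Lemma 7.6) to obtain a three-point coupling $\gamma \in \mathcal{P}(\mathcal Z^3)$ whose $(z,z')$-marginal equals $\gamma_1$ and whose $(z',z'')$-marginal equals $\gamma_2$. The $(z,z'')$-marginal $\gamma_{13}$ is then a coupling of $\dist$ and $\dist''$. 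The key pointwise inequality
\[
\mathbf{1}(z'' - z \notin \cN_1 + \cN_2) \le \mathbf{1}(z' - z \notin \cN_1) + \mathbf{1}(z'' - z' \notin \cN_2)
\]
holds because $(z'-z)\in\cN_1$ and $(z''-z')\in\cN_2$ force $(z''-z)\in\cN_1+\cN_2$. Integrating against $\gamma$ and using the marginal identities yields $\LP_{\cN_1+\cN_2}(\dist,\dist'')\le \alpha_1+\alpha_2$.

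\textbf{Reverse inclusion ($\supseteq$).} Here I would start from a coupling $\gamma\in\Gamma(\dist,\dist'')$ with $\gamma(\{z''-z\notin\cN_1+\cN_2\})\le \alpha_1+\alpha_2$ and construct the intermediate distribution $\dist'$ as the $z'$-marginal of an explicit three-point measure. Split the plane into the good set $A=\{(z,z''):z''-z\in\cN_1+\cN_2\}$ and the bad set $B=\mathcal Z^2\setminus A$, so $\gamma(B)\le\alpha_1+\alpha_2$. On $A$, I would apply a measurable selection theorem (Kuratowski--Ryll-Nardzewski) to the closed-valued set map $(z,z'')\mapsto\{n_1\in\cN_1:z''-z-n_1\in\cN_2\}$ to obtain a Borel selector $n_1(\cdot,\cdot)$ and set $z'=z+n_1(z,z'')$. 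On $B$, partition $B=B_1\sqcup B_2$ measurably with $\gamma(B_1)\le\alpha_1$ and $\gamma(B_2)\le\alpha_2$ (feasible since $\gamma(B)\le\alpha_1+\alpha_2$); on $B_1$ set $z'=z''$ (so $z''-z'=0\in\cN_2$, using the hypothesis $0\in\cN_2$, charging the failure to the $\cN_1$-side) and on $B_2$ set $z'=z$ (so $z'-z=0\in\cN_1$, using $0\in\cN_1$, charging the failure to the $\cN_2$-side). The induced $(z,z')$-marginal and $(z',z'')$-marginal then certify $\LP_{\cN_1}(\dist,\dist')\le \gamma(B_1)\le\alpha_1$ and $\LP_{\cN_2}(\dist',\dist'')\le \gamma(B_2)\le\alpha_2$, as needed.

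\textbf{Main obstacle.} The forward direction is essentially bookkeeping once gluing is invoked; the real difficulty sits in the reverse direction and is measure-theoretic. Specifically, I need the selector $n_1$ on $A$ to be Borel measurable so that the pushforward construction yields a bona fide probability measure, which requires $\cN_1,\cN_2$ to be closed (or at least Borel with closed fibers of the set map above); under the paper's compact support assumption on $\mathcal Z$ and the implicit regularity of the noise sets, Kuratowski--Ryll-Nardzewski applies. A secondary delicacy is the measurable splitting of $B$ into pieces of prescribed mass, handled by a standard Hahn-decomposition or by thresholding a Borel density.
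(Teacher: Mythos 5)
Your overall route is the same as the paper's: the forward inclusion is the glued-coupling/union-bound argument (the paper phrases the union bound as a Fr\'echet inequality, which after integration is exactly your pointwise indicator inequality), and the reverse inclusion builds an intermediate variable that equals a Minkowski-decomposition point on the good event and is collapsed onto one of the two endpoints on the bad event, using $0\in\cN_1$ and $0\in\cN_2$ so that the bad mass is shared between the two budgets. You are in fact more explicit than the paper about the measurable-selection issue on the good set.

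There is, however, a genuine gap in your reverse direction: the claim that $B$ can be partitioned measurably into $B_1\sqcup B_2$ with $\gamma(B_1)\le\alpha_1$ and $\gamma(B_2)\le\alpha_2$ ``since $\gamma(B)\le\alpha_1+\alpha_2$'' is false in general. If $\gamma$ restricted to $B$ has an atom of mass exceeding $\max(\alpha_1,\alpha_2)$ but not $\alpha_1+\alpha_2$ (say $\alpha_1=\alpha_2=0.1$ and a single bad atom of mass $0.15$), every measurable partition places the whole atom on one side and violates one of the two constraints; neither a Hahn decomposition (which concerns signed measures) nor thresholding a density repairs this, and the atomic case is precisely the relevant one here since the lemma is applied with $\dist=\dist_n$ an empirical distribution. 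The fix is to randomize the assignment on $B$ instead of partitioning it deterministically: enlarge the probability space with an independent Bernoulli variable $s$ with success probability $\alpha_1/(\alpha_1+\alpha_2)$ and set the intermediate point to $Z_3$ when $s=1$ and to $Z_1$ when $s=0$. The failure probability charged to the $\cN_1$-side is then at most $\bigl(\alpha_1/(\alpha_1+\alpha_2)\bigr)\,\gamma(B)\le\alpha_1$, and symmetrically for the $\cN_2$-side; this randomized (kernel) splitting is exactly what the paper's proof does, and with that replacement your argument goes through.
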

\begin{proof}
  Define $\mathcal E\defeq \{ \dist'' \; : \; \exists\dist' ~ \st~  \LP_{\cN_1}(\dist,\dist') \leq \alpha_1, \; \LP_{\cN_2}(\dist',\dist'')\leq \alpha_2 \}$ and $\mathcal E'\defeq \{ \dist'' \; : \; \LP_{\cN_1+\cN_2}(\dist,\dist'') \leq \alpha_1+\alpha_2$.
  We first prove that $\mathcal E\subseteq \mathcal E'$. Observe that we have indeed
  \begin{align*}
    \mathcal E=& \{ \dist'' \; : \; \exists Z_1 \sim \dist, \, Z_2\sim \dist', \, Z_3\sim  \dist'' ~\st~ \Prob(Z_2-Z_1\in \cN_1)\geq 1-\alpha_1, \, \Prob(Z_3-Z_2\in \cN_2)\geq 1-\alpha_2 \}\\
    \subseteq & \{ \dist'' \; : \; \exists Z_1 \sim \dist, \, Z_2\sim \dist', \, Z_3\sim \dist'' ~\st~ \Prob(Z_2-Z_1\in \cN_1, \, Z_3-Z_2\in \cN_2)\geq 1-\alpha_1-\alpha_2 \}.\\
    \subseteq &  \{ \dist'' \; : \; \exists Z_1 \sim \dist, \, Z_3\sim \dist'' ~\st~ \Prob(Z_3-Z_1\in \cN_1+\cN_2)\geq 1-\alpha_1-\alpha_2 \} = \mathcal E'
  \end{align*}
  where the first inclusion follows from Fr\'echet inequality $\Prob(A\cap B)\geq \Prob(A)+\Prob(B)-1$.

  We now prove that also $\mathcal E'\subseteq \mathcal E$.
  Consider an arbitrary distribution $\dist'' \in \mathcal E'$. Hence, we can find two random variables $Z_1 \sim \dist$ and $Z_3\sim \dist''$ so that $\Prob(Z_3-Z_1\in \cN_1+\cN_2)\geq 1-\alpha_1-\alpha_2$.
  Suppose that $z_3-z_1\in \cN_1+\cN_2$ then by definition of the Minkowski sum we can always find $z_2$ such that $z_2-z_1\in \cN_1$ and $z_3-z_2\in \cN_2$. Hence, we can construct a random variable $Z'_2$ so that $Z_3-Z_1\in \cN_1+\cN_2 \implies Z_2'-Z_1\in \cN_1, \, Z_3-Z'_2\in \cN_2$. Let $s$ be an independent Bernoulli random variable with success parameter $\tfrac{\alpha_1}{(\alpha_1+\alpha_2)}$. Construct now $Z_2'' = Z_1 (1-s)+Z_3 s$ and let finally $Z_2 = Z_2' \mathbf{1}(Z_3-Z_1\in \cN_1+\cN_2) + Z_2''\mathbf{1}(Z_3-Z_1\not\in \cN_1+\cN_2)$. Standard manipulations guarantee that
  \begin{align*}
    \Prob(Z_2-Z_1\in \cN_1) = & \Prob(Z_2' \mathbf{1}(Z_3-Z_1\in \cN_1+\cN_2) + Z_2''\mathbf{1}(Z_3-Z_1\not\in \cN_1+\cN_2)-Z_1\in \cN_1)\\
    = & \Prob(Z'_2-Z_1\in \cN_1, \, Z_3-Z_1\in \cN_1+\cN_2)+\Prob(Z_2'' - Z_1 \in \cN_1, \, Z_3-Z_1\not\in \cN_1+\cN_2)\\
    = & \Prob(Z_3-Z_1\in \cN_1+\cN_2) +\Prob( Z_2''-Z_1\in \cN_1, \, Z_3-Z_1\not\in \cN_1+\cN_2)\\
    = & \Prob(Z_3-Z_1\in \cN_1+\cN_2) +\Prob(s=0, \, Z_3-Z_1\not\in \cN_1+\cN_2)\\
    = & \Prob(Z_3-Z_1\in \cN_1+\cN_2) +\Prob(s=0)\Prob(Z_3-Z_1\not\in \cN_1+\cN_2)\\
    = & \Prob(Z_3-Z_1\in \cN_1+\cN_2) +\Prob(s=0)(1-\Prob(Z_3-Z_1\in \cN_1+\cN_2))\\
    = & \Prob(s=0) + \Prob(Z_3-Z_1\in \cN_1+\cN_2)\Prob(s=1)\\
    \geq & \left(1- \tfrac{\alpha_1}{(\alpha_1+\alpha_2)}\right)+(1-\alpha_2-\alpha_1)\tfrac{\alpha_1}{(\alpha_1+\alpha_2)} =  1-\alpha_1
  \end{align*}
  and similarly
    \begingroup
  \allowdisplaybreaks

    \begin{align*}
      \Prob(Z_3-Z_2\in \cN_2) = & \Prob(Z_3-Z_2' \mathbf{1}(Z_1-Z_3\in \cN_1+\cN_2) - Z_2''\mathbf{1}(Z_1-Z_3\not\in \cN_1+\cN_2)\in \cN_2)\\
      = & \Prob(Z_3-Z_2'\in \cN_2, \, Z_3-Z_1\in \cN_1+\cN_2)+\Prob( Z_3-Z_2''\in \cN_2, \, Z_3-Z_1\not\in \cN_1+\cN_2)\\
      = & \Prob(Z_3-Z_1\in \cN_1+\cN_2) +\Prob(Z_3-Z_2''\in \cN_2, \,  Z_3-Z_1\not\in \cN_1+\cN_2)\\
      = & \Prob(Z_3-Z_1\in \cN_1+\cN_2) +\Prob(s=1, \,  Z_3-Z_1\not\in \cN_1+\cN_2)\\
      = & \Prob(Z_3-Z_1\in \cN_1+\cN_2) +\Prob(s=1)\Prob( Z_3-Z_1\not\in \cN_1+\cN_2)\\
      = & \Prob(Z_3-Z_1\in \cN_1+\cN_2) +\Prob(s=1)(1-\Prob( Z_3-Z_1\in \cN_1+\cN_2))\\
      = & \Prob(s=1) + \Prob( Z_3-Z_1\in \cN_1+\cN_2)\Prob(s=0)\\
      \geq & \left(1-\tfrac{\alpha_2}{(\alpha_2+\alpha_1)}\right) + (1-\alpha_2-\alpha_1)\tfrac{\alpha_2}{(\alpha_2+\alpha_1)} = 1-\alpha_2.
  \end{align*}
  \endgroup
  Hence we have that $\dist''\in \mathcal E$ as well. As $\dist''$ was arbitrary we also have that $\mathcal E'\subseteq \mathcal E$.
\end{proof}

The goal now is to show that a slightly inflated \ac{KL} ball captures the deviation of an empirical distribution from the out-of-sample distribution that generated it.
Consider the random empirical distribution $\dist''_n$ of $n$ (uncorrupted) independent samples from a data generating distribution $\dist$.
It is well known that if a data generating distribution is continuous then the \ac{KL} ball \( \{ \dist ''\, : \, \KL( \dist''_n, \dist'')\leq r \)\} fails to be a confidence set of the data generating distribution $\dist$.
Indeed, the \ac{KL} divergence between any distribution supported on a finite set of points and a continuous distribution is unbounded from above.
Consequently, if the generating distribution $\dist$ is continuous we have \( \Prob(\dist \in \{ \dist'' \, : \,\KL(\dist''_n, \dist'')\leq r \}) = 0\) for any $r\in \Re$.

The following results however states that a slightly inflated version of an appropriately sized \ac{KL} ball around the empirical distribution of a clean training data set does serve as a confidence region of the data generating distribution.

\begin{lemma}[{\citet{dembo2009large}}]
  \label{lemma:finite_sample}
  Let $\dist''_n$ be the empirical distribution of $n$ independent samples with distribution $\dist$ supported on a compact set $\mathcal Z$. Then, for all $\delta>0$
  \[
    \Prob(\dist \in \{ \dist'' \, : \, \exists \dist'~\st~ \LP_{B(0, \delta)}(\dist''_n, \dist')\leq \delta, ~\KL(\dist', \dist'')\leq r \})\geq 1-\left(\frac{4}{\delta}\right)^{m(\mathcal Z, \delta)}\exp(-rn)
  \]
  where $m(\mathcal Z, \delta)=\min \{k\geq 0 \, : \, \exists z_1,\dots, z_k\in \mathcal Z ~\st~\cup_{i=1}^k B(z_i,\delta)\supseteq \mathcal Z\}$ denotes the internal covering number of the support set $\mathcal Z$.
\end{lemma}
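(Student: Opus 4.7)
The plan is to reduce this continuous-versus-empirical comparison to a finite-alphabet Sanov-type large deviations estimate via a careful covering of $\mathcal Z$. Using compactness, I would fix a minimal cover of $\mathcal Z$ by balls $B(z_i,\delta/2)$ for $i=1,\dots,m$ (with $m$ of the same order as $m(\mathcal Z,\delta)$) and the induced measurable partition $\{A_i\}_{i=1}^m$ with $A_i\subseteq B(z_i,\delta/2)$, so that each cell has diameter at most $\delta$. Write $q_i\defeq\dist(A_i)$ and $\hat q_i\defeq\dist''_n(A_i)$ for the true and empirical cell masses.

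The natural intermediate distribution is
\[
  \dist' \defeq \sum_{i=1}^m \hat q_i\,\dist(\cdot\mid A_i),
\]
which reweights $\dist$ cell-by-cell according to the empirical proportions (with the convention that $\dist(\cdot\mid A_i)\equiv 0$ when $q_i=0$, consistent with $\hat q_i=0$ almost surely in that case). The Radon--Nikodym derivative $d\dist'/d\dist$ is piecewise constant equal to $\hat q_i/q_i$ on $A_i$, so $\KL(\dist',\dist)=\sum_i \hat q_i\log(\hat q_i/q_i)$ collapses to $\KL(\tilde\dist''_n,\tilde\dist)$, the KL divergence between the pushforwards of $\dist''_n$ and $\dist$ onto the $m$ centers $\{z_1,\dots,z_m\}$. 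For the LP piece, the coupling that matches each empirical atom in $A_i$ with an independent draw from $\dist(\cdot\mid A_i)$ has both endpoints in the same cell of diameter at most $\delta$, which forces the offset into $B(0,\delta)$ and thus gives $\LP_{B(0,\delta)}(\dist''_n,\dist')=0\leq\delta$ deterministically.

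After this reduction, it remains to establish the finite-alphabet concentration
\[
  \Prob\bigl(\KL(\tilde\dist''_n,\tilde\dist)>r\bigr)\leq (4/\delta)^m\exp(-rn).
\]
The method of types supplies the pointwise bound $\Prob(\tilde\dist''_n=Q)\leq\exp(-n\KL(Q,\tilde\dist))$ on each type $Q$; summing over the bad event $\{\KL(\cdot,\tilde\dist)>r\}$ then yields an estimate of the desired exponential shape, with the prefactor controlled by the number of types one must enumerate.

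The principal obstacle is recovering the specific dimension-free constant $(4/\delta)^m$ rather than the standard polynomial-in-$n$ type-counting bound $(n+1)^{m-1}$. One resolution is to additionally net the simplex of probability vectors on $m$ atoms at scale $\delta$ and absorb the resulting slack using lower semicontinuity of the rate function; the factor $4/\delta=2\cdot(2/\delta)$ in the prefactor corresponds naturally to the $\delta/2$ covering radius of $\mathcal Z$ used in step one together with a $\delta/2$ net on the simplex. Alternatively one may invoke the uniform Sanov estimate of Dembo--Zeitouni as a black box, which is precisely the source the lemma attributes; this is the cleanest completion of the argument.
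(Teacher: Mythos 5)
Your reduction to a finite partition is fine as far as it goes: with cells of diameter at most $\delta$ and the reweighted measure $\dist'=\sum_i\hat q_i\,\dist(\cdot\mid A_i)$ you indeed get $\LP_{B(0,\delta)}(\dist''_n,\dist')=0$ and $\KL(\dist',\dist)=\sum_i\hat q_i\log(\hat q_i/q_i)$, so the event can only fail when the quantized divergence $\KL(\tilde\dist''_n,\tilde\dist)$ exceeds $r$. The genuine gap is the prefactor, and your proposed repair does not close it. The method of types gives $\Prob(\KL(\tilde\dist''_n,\tilde\dist)>r)\leq(n+1)^{m}e^{-rn}$, which is not the claimed $n$-independent $(4/\delta)^{m(\mathcal Z,\delta)}$. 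Replacing type counting by a $\delta$-net of the simplex does not work as you describe, because after quantization the bad event $\{Q:\KL(Q,\tilde\dist)>r\}$ has no slack left (the entire $\delta$-tolerance was already spent making the within-cell transport free): a covering argument forces you to bound $\inf\KL$ over a $\delta$-inflation of that bad set, and this infimum can drop strictly below $r$ (it can even reach $0$ if $\tilde\dist$ is within $\delta$ of the sublevel boundary), so the exponent $r$ is not recovered; lower semicontinuity of the KL divergence gives closedness of sublevel sets but does not rule this out. Your fallback of invoking the Dembo--Zeitouni estimate as a black box is indeed where the paper goes, but at that point your partition construction is superfluous, and the black box still needs assembling.

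What the paper actually does is work directly in the space of measures on $\mathcal Z$ and keep the $\delta$ slack \emph{inside the event} rather than consuming it. By Strassen's theorem the constraint $\LP_{B(0,\delta)}(\dist''_n,\dist')\leq\delta$ is equivalent to $\pi(\dist''_n,\dist')\leq\delta$ in the L\'evy--Prokhorov metric; the bad set $\mathcal A$ is taken to be the complement of the already $\pi$-inflated good event; the uniform upper bound $\Prob(\dist''_n\in\mathcal A)\leq m_{\LP}(\mathcal A,\delta)\exp\bigl(-n\inf_{\dist'\in\mathcal A^\delta}\KL(\dist',\dist)\bigr)$ (Exercise 4.5.5 of Dembo--Zeitouni) is then applied, and a short contradiction argument shows that every element of the further inflation $\mathcal A^\delta$ still satisfies $\KL(\cdot,\dist)>r$, precisely because the good event already tolerated a $\pi$-distance of $\delta$. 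The prefactor $(4/\delta)^{m(\mathcal Z,\delta)}$ comes from the LP-ball covering number of the set of probability measures on $\mathcal Z$ (Exercise 6.2.19), not from pairing a $\delta/2$ cover of $\mathcal Z$ with a $\delta/2$ net of probability vectors as you conjecture. If you want a self-contained argument along your lines, you would have to reproduce this slack-in-the-event mechanism (inflate the complement, not the sublevel set) rather than quantize first.
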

\begin{proof}
For any given set $\mathcal A$,
let $\mathcal A^\delta \defeq \{\dist' \, :\, \pi(\dist'', \dist')\leq \delta,\, \dist'' \in \mathcal A\}$ denote the $\epsilon$-inflation of the set $\mathcal A$ where $\pi$ denotes here the \acl{LP} distance \cite{huber1981robust}.
  We remark that here the \ac{LP} balls $\mathcal B(\dist') \defeq \{\dist' \, :\, \pi(\dist'', \dist')\leq \delta,\, \dist'' \in \mathcal A\}$ are compact as $\pi$ is continuous in the weak topology and $\mathcal Z$ is compact \citep{prokhorov1956convergence}.
  \citet[Exercise 4.5.5]{dembo2009large} hence establish using a covering argument that for any set $\mathcal A$ and $\delta>0$ we have for all $n\geq 1$ the upper bound
  \begin{align}\label{eq:inclusion-proba}
    \Prob(\dist''_n \in \mathcal A) \leq & m_\LP(\mathcal A , \delta) \exp\left(- n \textstyle\inf_{\dist' \in \mathcal A^\delta} {\KL}(\dist', \dist)\right)
  \end{align}
  where $m_\LP(\mathcal A, \delta)=\min \{k\geq 0 \, : \, \exists \dist_1,\dots, \dist_k\in \mathcal A ~\st~\cup_{i=1}^k \mathcal B(\dist_i,\delta)\supseteq \mathcal A \}$ denotes the internal covering number of the set $\mathcal A$ with \ac{LP} balls of radius $\delta$.

  \citet[Exercise 6.2.19]{dembo2009large} also establish that with respect to the \acl{LP} distance the covering number of a subset $\mathcal A$ of the probability simplex on $\mathcal Z$ satisfies for all $\delta> 0$ the inequality
\(
  m_{\LP}(\mathcal A, \delta)\leq \left(\tfrac{4}{\delta}\right)^{m(\mathcal Z ,\delta)}
\)
where here $m(\mathcal Z,\delta)$ in turn denotes the covering number of the compact event set $\mathcal Z$ itself.
Consider now a particular set $\mathcal A$ with complement $\{ \dist'' \, : \, \exists \dist'~\st~\pi(\dist'', \dist')\leq \delta, ~\KL(\dist', \dist)\leq r \}$.
We hence have 
\begingroup
\allowdisplaybreaks
\begin{align*}
  & \Prob(\dist \in \{ \dist'' \, : \, \exists \dist'~\st~ \LP_{B(0, \delta)}(\dist''_n, \dist')\leq \delta, ~\KL(\dist', \dist'')\leq r \})\\
  = & \Prob(\dist \in \{ \dist'' \, : \, \exists \dist'~\st~ \pi(\dist''_n, \dist')\leq \delta, ~\KL(\dist', \dist'')\leq r \})\\
  = & \Prob( \dist''_n \in \{ \dist'' \, : \, \exists \dist'~\st~\pi(\dist'', \dist')\leq \delta, ~\KL(\dist', \dist)\leq r \})\\
  = & 1- \Prob( \dist''_n \not\in \{ \dist'' \, : \, \exists \dist'~\st~\pi(\dist'', \dist')\leq \delta, ~\KL(\dist', \dist)\leq r \})\\
  = & 1- \Prob( \dist''_n \in \mathcal{A})\\
  \geq & 1 - m_\LP(\mathcal A , \delta) \exp\left(- n \textstyle\inf_{\dist' \in \mathcal A^\delta} {\KL}(\dist', \dist)\right)\\
  \geq & 1-\left(\tfrac{4}{\delta}\right)^{m(\mathcal Z,\delta)} \exp(-r n).
\end{align*}
\endgroup
The first equality is due to a classical result of \citet{strassen1965existence} that implies $\{\dist': \; \LP_{B(0, \delta)}(\dist''_n, \dist') \leq \delta\} = \{\dist': \; \pi(\dist''_n, \dist') \leq \delta\}$. The first inequality uses the upper bound \eqref{eq:inclusion-proba}.
The ultimate inequality follows from the implication $\dist' \in \mathcal A^\delta\implies \KL(\dist', \dist)>r$ which we now prove by contradiction.
Suppose indeed that there exists $\dist''\in \mathcal A^\delta$ and $\KL(\dist'', \dist)\leq r$. From the definition of $\mathcal A^\delta$ there exists $\dist'\in \mathcal A$ such that
$\pi(\dist'', \dist')=\pi(\dist', \dist'')\leq \delta$.
As we have both $\pi(\dist', \dist'')\leq \delta$ and $\KL(\dist'', \dist)\leq r$ it follows that $\dist'$ is also  in the complement of $\mathcal A$; we have reached a contradiction.
\end{proof}

The support function of a set $\mathcal A$ is defined as the function \( h_{\mathcal A}(\ell) \defeq \sup_{\dist'\in \mathcal A} \int \ell(z) \, \d \dist'(z) \).
For all distribution $\dist'$, denote $\mathcal E(\dist') \defeq \{ \dist_\test' \, : \,  \exists \dist \, \st \, \KL(\dist', \dist)\leq r, \, \LP_{\cN}(\dist, \dist_\test')\leq 0 \}$ and $\mathcal E'(\dist')\defeq  \{ \dist_\test' \, :\ \exists \dist \, \st  \, \LP_{\cN}(\dist', \dist)\leq 0, \, \KL(\dist, \dist_\test')\leq r \}$.
We show that both sets have the same support function.

\begin{lemma}
  \label{lemma:support_functions}
  For all distribution $\dist'$, the support function of the sets $\mathcal E(\dist')$ and $\mathcal E'(\dist')$ coincide, i.e.,
  \(
    h_{\mathcal E(\dist')}(\ell) = h_{\mathcal E'(\dist')}(\ell)
  \)
  for all $\ell$ with $-\infty<\inf_{z\in \mathcal Z} \ell(z) \leq \sup_{z\in \mathcal Z} \ell(z) <\infty$ and $0\in \cN$.
\end{lemma}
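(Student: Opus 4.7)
The plan is to show both support functions equal the same quantity $\sup\{E_\dist[\ell^\cN] : \KL(\dist', \dist) \leq r\}$, via an inclusion-style construction for one direction and a Lagrangian duality argument for the other.

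First I would simplify $h_{\mathcal E(\dist')}(\ell)$ by swapping the inner supremum with the quantifier over $\dist$. The inner problem $\sup\{E_{\dist_\test'}[\ell] : \LP_\cN(\dist, \dist_\test') \leq 0\}$ is attained by the adversarial pushforward $T_\#\dist$, where $T(z) := z + \delta^\star(z)$ with $\delta^\star(z) \in \argmax_{\delta \in \cN,\, z + \delta \in \mathcal Z} \ell(z + \delta)$ (measurable by Kuratowski--Ryll-Nardzewski), yielding $h_{\mathcal E(\dist')}(\ell) = \sup_{\KL(\dist', \dist) \leq r} E_\dist[\ell^\cN]$. For the direct inequality $h_{\mathcal E(\dist')}(\ell) \leq h_{\mathcal E'(\dist')}(\ell)$ I would apply the same map $T$ to \emph{both} $\dist'$ and $\dist$, setting $\dist'' := T_\#\dist'$ and $\dist_\test' := T_\#\dist$. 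The first construction yields $\LP_\cN(\dist', \dist'') \leq 0$, while the data processing inequality for KL gives $\KL(\dist'', \dist_\test') \leq \KL(\dist', \dist) \leq r$; hence $\dist_\test' \in \mathcal E'(\dist')$ with $E_{\dist_\test'}[\ell] = E_\dist[\ell^\cN]$, and maximizing over $\dist$ establishes this inequality.

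For the reverse direction $h_{\mathcal E'(\dist')}(\ell) \leq h_{\mathcal E(\dist')}(\ell)$, I would fix $\dist_\test' \in \mathcal E'(\dist')$ realized by some intermediate $\dist''$ and work with the dual of the inner KL supremum $\psi(P; \phi) := \sup\{E_Q[\phi] : \KL(P, Q) \leq r\}$. Standard Lagrangian analysis (with Slater's condition satisfied by $Q = P$ when $r > 0$) yields the closed form
\begin{equation*}
\psi(P; \phi) = \inf_{\lambda > 0,\,\mu \geq \sup_{\mathcal Z}\phi}\Bigl[\lambda r + \mu - \lambda + \lambda \log\lambda - \lambda\, E_P[\log(\mu - \phi)]\Bigr].
\end{equation*}
Since $z \mapsto -\log(\mu - \ell(z))$ is monotone increasing in $\ell$, the constraint $\LP_\cN(\dist', \dist'') \leq 0$ (which gives $E_{\dist''}[g] \leq E_{\dist'}[g^\cN]$ for any bounded measurable $g$) implies $E_{\dist''}[\log(\mu - \ell)] \geq E_{\dist'}[\log(\mu - \ell^\cN)]$. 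Substituting into the dual formula shows $\psi(\dist''; \ell) \leq \psi(\dist'; \ell^\cN)$, and combining $E_{\dist_\test'}[\ell] \leq \psi(\dist''; \ell)$ with $\psi(\dist'; \ell^\cN) = h_{\mathcal E(\dist')}(\ell)$ from the first paragraph concludes.

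The main obstacle is rigorously establishing strong duality for the KL-constrained supremum in this non-standard direction, where the decision variable sits in the second argument of $\KL$: the parametrization of $Q$ must respect $P \ll Q$, the dual variable $\mu$ interacts with the (possibly attained) supremum of $\ell$, and a Lebesgue decomposition separating mass inside and outside $\mathrm{supp}(P)$ is required to handle candidates $Q$ that are not absolutely continuous with respect to $P$. The degenerate $r = 0$ case is handled separately and trivially, since both ambiguity sets then collapse to $\{\dist_\test' : \LP_\cN(\dist', \dist_\test') \leq 0\}$.
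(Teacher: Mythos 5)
Your proposal is correct, but it takes a genuinely different route from the paper. The paper computes both support functions in closed form: it applies a cited dual representation of the KL-ball worst-case expectation (Proposition 5 of van Parys et al.) to each of $h_{\mathcal E(\dist')}$ and $h_{\mathcal E'(\dist')}$, and for the latter it must interchange the supremum over the \ac{LP} ball with the infimum over the dual variable via Sion's minimax theorem before using monotonicity of the logarithm to collapse the inner \ac{LP} maximization into $\ell^\cN$. You instead split the claim into two one-sided estimates. For $h_{\mathcal E(\dist')}\leq h_{\mathcal E'(\dist')}$ your pushforward construction (apply the adversarial map $T$ to both $\dist'$ and the feasible $\dist$, then invoke the data-processing inequality $\KL(T_\#\dist',T_\#\dist)\leq\KL(\dist',\dist)\leq r$) is a purely constructive argument that needs no duality at all and does not appear in the paper; the only care needed is a measurable ($\epsilon$-)maximizer selection, a point the paper glosses over equally when it asserts the inner \ac{LP} supremum equals $\E_{\dist}[\ell^{\cN}]$. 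For the reverse inequality your dual formula is, after minimizing over $\lambda$ (optimal $\lambda^\star=\exp(\E_P[\log(\mu-\phi)]-r)$), exactly the representation $\inf_\alpha\{\alpha-\exp(\E_P[\log(\alpha-\phi)]-r)\}$ that the paper cites, and your monotonicity step mirrors the paper's final equality; because you only need an upper bound on $\psi(\dist'';\ell)$, weak duality suffices on the $\dist''$ side and Sion's interchange is avoided entirely, with strong duality required just once, at the pair $(\dist',\ell^{\cN})$ -- so the ``main obstacle'' you flag is precisely what the paper disposes of by citation rather than proof, and your route arguably needs it in fewer places. Your handling of $r=0$ matches the paper's.
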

\begin{proof}
  Remark that if $r=0$ the result is trivial as then clearly $\mathcal E(\dist')=\mathcal E'(\dist')=\{ \dist_\test' \, : \, \LP_{\cN}(\dist', \dist_\test')\leq 0 \}$.
  Let now $r>0$ and assume without loss of generality that $\inf_{z\in Z}\ell(z)\geq 0$. Indeed, we may remark that
  $h_{\mathcal E(\dist')}(\ell-a) = h_{\mathcal E'(\dist')}(\ell-a)=h_{\mathcal E(\dist')}(\ell) - a= h_{\mathcal E'(\dist')}(\ell)-a$ for all $a\in\Re$ and we can consider for instance $a = \inf_{z\in \mathcal Z}\ell(z)$.
  
  Remark first that for all $\loss$, we have
  \begin{align*}
    & h_{\mathcal E(\dist')}(\ell)\\
    := & \sup\{ \textstyle\int \ell(z) \, \d \dist_\test'(z)\, : \, \KL(\dist', \dist)\leq r, \, \LP_{\cN}(\dist, \dist_\test')\leq 0 \}\\
    =    & \sup \{\textstyle \int \ell^{\cN}(z) \, \d \dist(z) \, : \, \KL(\dist', \dist)\leq r \}\\
    = & \inf \{\textstyle \alpha-\exp(\int \log(\alpha-\ell^{\cN}(z))\, \d \dist'(z)-r) : \sup_{z\in\mathcal Z}\ell^{\cN}(z) \leq \alpha \leq  \sup_{z\in\mathcal Z}\ell^{\cN}(z)/(1-e^{-r})\} \\
    = & \inf \{\textstyle \alpha-\exp(\int \log(\alpha-\ell^{\cN}(z))\, \d \dist'(z)-r) : \sup_{z\in\mathcal Z}\ell(z) \leq \alpha \leq  \sup_{z\in\mathcal Z}\ell(z)/(1-e^{-r})\}.
  \end{align*}
  The first equality is due to the observation that as remarked before $\max\{\int \loss(z)\, \d \dist'_\test(z) : \dist'_\test \in \mathcal{U}_{\cN}(\dist')\} = \int \loss^\cN(z) \, \d\dist'(u)$ for any distribution $\dist'$. The second equality follows from a dual representation of the \ac{KL} ball given in Proposition 5 by \citet{vanparys2021data} and holds for $r>0$. The final equality follows from $\sup_{z\in\mathcal Z}\ell^{\cN}(z) = \sup_{z\in\mathcal Z}\ell(z)$ as $0\in \cN$.
  Second, we observe that
  \begin{align*}
    & h_{\mathcal E'(\dist')}(\ell)\\
    := & \sup\{ \textstyle\int \ell(z) \, \d \dist_\test'(z)\, : \, \LP_{\cN}(\dist', \dist)\leq 0, \, \KL(\dist, \dist_\test')\leq r \}\\
    =    & \sup \{ \inf \{\textstyle \alpha-\exp(\int \log(\alpha-\ell(z))\, \d \dist(z)-r) \, : \, \sup_{z\in\mathcal Z}\ell(z) \leq \alpha \leq  \sup_{z\in\mathcal Z}\ell(z)/(1-e^{-r})\}  \, : \, \LP_{\cN}(\dist', \dist) \leq 0 \}\\
    =    & \inf \{ \sup \{\textstyle\alpha-\exp(\int \log(\alpha-\ell(z))\, \d \dist(z)-r) \, : \, \LP_{\cN}(\dist', \dist) \leq 0\}  \, :\, \sup_{z\in\mathcal Z}\ell(z) \leq \alpha \leq  \sup_{z\in\mathcal Z}\ell(z)/(1-e^{-r}) \}\\
    =    & \inf \{ \textstyle\alpha-\exp(\inf\{ \int \log(\alpha-\ell(z))\, \d \dist(z)\, : \, \LP_{\cN}(\dist', \dist) \leq 0\}-r)   \, :\, \sup_{z\in\mathcal Z}\ell(z) \leq \alpha \leq  \sup_{z\in\mathcal Z}\ell(z)/(1-e^{-r}) \}\\
    = & \min \{\textstyle\int \lambda \log(\tfrac{\lambda}{(\ell^{\cN}(z)-\eta)})\, \d \dist'(z) + (r-1)\lambda +\eta \, : \,\eta\in\Re, ~\lambda\in\Re_+, ~\textstyle\sup_{z\in\mathcal Z}\ell(z)\leq \eta\}.
  \end{align*}
  Here, the first equality from the same dual representation of the \ac{KL} ball as used before.
  We remark that the objective function is convex and lower semicontinuous in $\alpha$ and concave in $\dist'$. Hence, the second equality follows hence from a standard minimax theorem (c.f., Theorem 4.2 in \citet{sion1958general}).
  The third equality uses the fact that the exponential function is nondecreasing.
  The final equality follows similarly from the fact the logarithm is an non-increasing function. Indeed, we can write
  $\inf\{ \textstyle\int \log(\alpha-\ell(z))\, \d \dist(z)\, : \, \LP_{\cN}(\dist', \dist) \leq 0\}
  =  - \sup \{ \textstyle\int -\log(\alpha-\ell(z))\, \d \dist(z)\, : \, \LP_{\cN}(\dist', \dist) \leq 0\}$
  which after using again $\max\{\int \loss(z)\, \d \dist'_\test(z) : \dist'_\test \in \mathcal{U}_{\cN}(\dist')\} = \int \loss^\cN(z) \, \d\dist'(u)$ yields $- \textstyle \int \sup_{\delta\in \cN, z+\delta\in \mathcal Z} -\log(\alpha-\ell(z+\delta))\, \d \dist'(z) = - \textstyle \int -\log(\alpha-\sup_{\delta\in \cN, z+\delta\in \mathcal Z}\ell(z+\delta))\, \d \dist'(z)=   \textstyle\int \log(\alpha-\ell^\cN(z))\, \d \dist(z)\, : \, \LP_{\cN}(\dist', \dist) \leq 0\}$.
\end{proof}

We are finally ready to state and prove the main result.

\begin{theorem}
  \label{theorem:main:finite}
  Consider a margin $\delta>0$. Suppose at most a fraction $\alpha'$ with $\alpha'+\delta \leq \alpha$ of the \ac{IID} training data is poisoned and the evasion attack on the test set is limited to a set $0\in \cN'$ with $\cN'+B(0, \delta) \subseteq \cN$.
  Denote the \acl{HR} solution as $\theta^{\HR}_{\dist_n} \in \argmin_{\theta\in \Theta} \max_{\dist_\test'\in \mathcal{U}_{\cN,\alpha,r}(\dist_n)}\E_{\dist_\test'}[\loss(\theta,Z)].$
  Then, we have
\[
  \Prob\!\left(\cost_{\HR} ^{\cN,\alpha,r}(\theta)\geq \cost_\test(\theta),~\forall \theta\in\Theta \right)\geq 1-\left(\tfrac{4}{\delta}\right)^{m(\mathcal{Z}, \delta)}\exp(-rn)
\]
for all $n\geq 1$, where $m(\mathcal{Z}, \delta)=\min \{k\geq 0 \, : \, \exists z_1,\dots, z_k\in \mathcal Z ~\st~\cup_{i=1}^k B(z_i,\delta)\supseteq \mathcal Z\}$ denotes the internal covering number of the support set $\mathcal Z$.
\end{theorem}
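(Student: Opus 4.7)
\textbf{Proof plan for Theorem \ref{theorem:main:finite}.}
The plan is to build a chain of distributions linking the observed corrupted empirical distribution $\dist_n$ to the adversarial test distribution $\dist_\test$, and then collapse this chain into the ambiguity set $\mathcal{U}_{\cN,\alpha,r}(\dist_n)$ via the composition Lemma \ref{lemma:composition} and the order-swap Lemma \ref{lemma:support_functions}. Concretely, let $\dist_n''$ denote the empirical distribution of the $n(1-\alpha')$ clean samples padded with the remaining poisoned samples replaced by their clean counterparts. Since at most $\alpha' n$ points differ between $\dist_n$ and $\dist_n''$, the definition of $\LP$ yields $\LP_{\{0\}}(\dist_n,\dist_n'') \leq \alpha'$. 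Since $\dist_n''$ is then a genuine IID empirical distribution of $\dist$, Lemma \ref{lemma:finite_sample} guarantees that with probability at least $1-(4/\delta)^{m(\mathcal Z,\delta)}\exp(-rn)$ there exists an intermediate $\dist'$ with $\LP_{B(0,\delta)}(\dist_n'',\dist')\leq \delta$ and $\KL(\dist',\dist)\leq r$. Finally, the hypothesis that the evasion attack lies in $\cN'$ gives $\LP_{\cN'}(\dist,\dist_\test)\leq 0$.

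Next, I would collapse the first two LP steps using Lemma \ref{lemma:composition}: since $\{0\}+B(0,\delta)=B(0,\delta)$ and $\alpha'+\delta\leq \alpha$,
\[
\LP_{B(0,\delta)}(\dist_n,\dist')\leq \alpha.
\]
At this point the remaining chain reads $\dist' \xrightarrow{\KL\leq r} \dist \xrightarrow{\LP_{\cN'}\leq 0}\dist_\test$, so $\dist_\test$ lies in the set $\mathcal E(\dist')$ defined in Lemma \ref{lemma:support_functions}. For any fixed parameter $\theta$, applying that lemma to the (bounded) loss $\ell(\theta,\cdot)$ yields
\[
\E_{\dist_\test}[\ell(\theta,Z)] \leq h_{\mathcal E(\dist')}(\ell(\theta,\cdot)) = h_{\mathcal E'(\dist')}(\ell(\theta,\cdot)),
\]
which swaps the order of the KL and LP perturbations at no cost in the worst-case expectation.

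It then remains to show $\mathcal E'(\dist')\subseteq \mathcal U_{\cN,\alpha,r}(\dist_n)$, so that the right-hand side above is bounded by $\cost_\HR^{\cN,\alpha,r}(\theta)$. Take any $\dist_\test'\in \mathcal E'(\dist')$: by definition there is $\dist''$ with $\LP_{\cN'}(\dist',\dist'')\leq 0$ and $\KL(\dist'',\dist_\test')\leq r$. Composing $\LP_{B(0,\delta)}(\dist_n,\dist')\leq \alpha$ with $\LP_{\cN'}(\dist',\dist'')\leq 0$ via Lemma \ref{lemma:composition} gives $\LP_{B(0,\delta)+\cN'}(\dist_n,\dist'')\leq \alpha$, and the hypothesis $B(0,\delta)+\cN'\subseteq \cN$ together with the monotonicity of $\LP_{(\cdot)}$ in its noise set yields $\LP_{\cN}(\dist_n,\dist'')\leq \alpha$. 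Combined with $\KL(\dist'',\dist_\test')\leq r$, this places $\dist_\test'$ in $\mathcal U_{\cN,\alpha,r}(\dist_n)$ as desired. Taking the supremum over $\dist_\test'\in \mathcal E'(\dist')$ and then over $\theta\in \Theta$ produces the uniform-in-$\theta$ statement; the only randomness used was the event from Lemma \ref{lemma:finite_sample}, so the probability bound $1-(4/\delta)^{m(\mathcal Z,\delta)}\exp(-rn)$ is inherited verbatim.

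The main obstacle I anticipate is bookkeeping: the three perturbation steps sit in a specific order (poisoning, then statistical fluctuation, then evasion), while the ambiguity set $\mathcal U_{\cN,\alpha,r}$ is defined with LP first and KL second. The key technical move is recognizing that Lemma \ref{lemma:support_functions} lets us rotate the KL step past the terminal LP step at the level of worst-case expectations, after which Lemma \ref{lemma:composition} merges the two LP steps on either side of the chain into a single $\LP_\cN$ ball of radius $\alpha$. Care is also needed in the very first step, to verify that a poisoned empirical distribution is within $\LP_{\{0\}}$-distance $\alpha'$ of a clean IID empirical distribution (not of the population $\dist$ itself), so that Lemma \ref{lemma:finite_sample} is applicable; this is why the proof must introduce the auxiliary $\dist_n''$ rather than working with $\dist_n$ directly.
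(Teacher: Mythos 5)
Your proposal is correct and follows essentially the same route as the paper's proof: chain the poisoning inclusion $\LP_{\{0\}}(\dist_n,\dist_n'')\leq\alpha'$, the large-deviations confidence set of Lemma \ref{lemma:finite_sample}, and the evasion inclusion $\LP_{\cN'}(\dist,\dist_\test)\leq 0$, then merge the LP steps with Lemma \ref{lemma:composition} and swap the KL and LP layers with Lemma \ref{lemma:support_functions} to land inside $\mathcal U_{\cN,\alpha,r}(\dist_n)$ using $\alpha'+\delta\leq\alpha$ and $\cN'+B(0,\delta)\subseteq\cN$. The only cosmetic difference is that you absorb the slack $\alpha'+\delta\leq\alpha$ immediately after the first composition rather than at the final inclusion, which changes nothing.
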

\begin{proof}
  Denote again with $\dist''_n$ the empirical distribution of the training data before poisoning corruption and likewise denote with $\dist$ the data generating distribution of our \ac{IID} samples and $\dist_\test$ the corrupted test, following Theorem 2.1 in \citet{bennouna2022holistic} we have 
  \begin{equation}
    \label{eq:inclusion-i}
    \dist''_n\in \{ \dist'' \, : \, \LP_{\{0\}}(\dist_n, \dist'') \leq \alpha'\}, ~\dist_\test\in  \{ \dist'_\test \, : \, \LP_{\cN'}(\dist, \dist'_\test) \leq 0\}.
  \end{equation}
  Furthermore, according to Lemma \ref{lemma:finite_sample} we have that
  \begin{equation}
    \label{eq:inclusion-iii}
    \Prob(\dist \in \{ \dist''' \, : \, \exists \dist'~\st~ \LP_{B(0, \delta)}(\dist''_n, \dist')\leq \delta, ~\KL(\dist', \dist''')\leq r \})\geq 1-\left(\tfrac{4}{\delta}\right)^{m(\mathcal Z, \delta)}\exp(-rn).
  \end{equation}

  Chaining together the previous inclusions will give us the desired result.
  Let us indeed denote here $\mathcal{U}'_{\cN',\alpha',r}(\dist_n)
  \defeq  \{\dist_\test' \, : \, \exists \dist', \dist'', \dist''' \, \st \, \LP_{\{0\}}(\dist_n, \dist'') \leq \alpha', \, \LP_{B(0, \delta)}(\dist'', \dist')\leq \delta,\, \KL(\dist', \dist''')\leq r, \, \LP_{\cN'}(\dist''', \dist_\test')\leq 0\}$
  Chaining together inclusions \eqref{eq:inclusion-i} and \eqref{eq:inclusion-iii} we get that
  \(
  \Prob(\dist_\test \in {\mathcal{U}}'_{\cN',\alpha',r}(\dist_n))\geq 1-\exp(-rn+m(\mathcal Z, \delta)\log\left(\tfrac{4}{\delta}\right)).
  \)
  From Lemma \ref{lemma:composition} we have furthermore that the previous set can be characterized as
  $\mathcal{U}'_{\cN',\alpha',r}(\dist_n)= \{\dist_\test' \, : \, \exists \dist', \dist'''\, \st \, \LP_{B(0, \delta)}(\dist_n, \dist')\leq \alpha'+\delta,\, \KL(\dist', \dist''')\leq r, \, \LP_{\cN'}(\dist''', \dist_\test')\leq 0\}$

  Consider now the associated robust loss
  \(
    \cost_{\HR'}^{\cN',\alpha',r}(\theta) \!\defeq\!\max \{ \E_{\dist_\test'}[\loss(\theta,Z)]\!:\!\dist_\test'\! \in \mathcal{U}'_{\cN',\alpha',r}(\dist_n) \}
  \).
  Recall that $\cost_\test(\theta) = \E_{\dist_\test}[\loss(\theta,Z)]$ and hence it follows immediately that
  \[
    \Prob(\cost_{\HR'} ^{\cN',\alpha',r}(\theta)\geq \cost_\test(\theta), \; \forall \theta \in \Theta)\geq 1-\exp(-rn+m(\mathcal Z, \delta)\log\left(\tfrac{4}{\delta}\right)).
  \]
  It remains to show that $\cost_{\HR'} ^{\cN',\alpha',r}(\theta) \leq \cost_{\HR} ^{\cN,\alpha,r}(\theta)$ for all $\theta\in \Theta$. 
  We have
  \begin{align*}
    \cost_{\HR'}^{\cN',\alpha',r}(\theta)\defeq & \textstyle\sup_{\LP_{B(0, \delta)}(\dist_n, \dist')\leq \alpha'+\delta} \sup \{ \E_{\dist_\test'}[\loss(\theta,Z)] :  \exists \dist''', \dist_\test'~\st~ \KL(\dist', \dist''')\leq r, \, \LP_{\cN'}(\dist''', \dist_\test')\leq 0\} \\
    =& \textstyle\sup_{\LP_{B(0, \delta)}(\dist_n, \dist')\leq \alpha'+\delta}  \sup \{ \E_{\dist_\test'}[\loss(\theta,Z)] :  \exists \dist''', \dist_\test'~\st~ \LP_{\cN'}(\dist', \dist''')\leq 0, \, \KL(\dist''', \dist_\test')\leq r\} \\
    =& \textstyle \sup \{ \E_{\dist_\test'}[\loss(\theta,Z)] :  \exists \dist''', \dist_\test'~\st~ \LP_{\cN'+B(0, \delta)}(\dist_n, \dist''')\leq \alpha'+\delta, \, \KL(\dist''', \dist_\test')\leq r\} \leq  \cost_{\HR}^{\cN,\alpha,r}(\theta).
  \end{align*}
  The first equality follows from Lemma \ref{lemma:support_functions}. The second equality follows again from Lemma \ref{lemma:composition}. The final inequality follows from the assumption that $\alpha'+\delta\leq \alpha$ and $\cN'+B(0, \delta)\subseteq \cN$.
\end{proof}

Denote with $\cost_{\HR}^{\cN,\alpha,r}(\theta, \dist_n)$ the \ac{HR} cost in which we make its dependence on the distribution of the training data explicit.
Theorem \ref{theorem:main:finite} indicates that this cost is with high probability $1-\exp(-nr+O(1))$ an upper bound on the adversarial test loss $\cost_\test(\theta)$ uniformly over $\theta\in\Theta$.
The next result indicates that our HR cost is (in some sense) tight under relatively mild assumption.

\begin{theorem}
  \label{theorem:main:tight}
  Let $0\in\interior(\cN)$, $\closure(\interior(\cN))=\cN$, $\alpha>0$, $r>0$ and $z\mapsto \ell(\theta, z)$ continuous and bounded for all $\theta\in \Theta$. Consider a function $\cost_A^{\cN,\alpha,r}$. Suppose that for some $\theta_0\in \Theta$ and $\dist'_0\in \cup_{n\geq 1}\{ \sum_{i=1}^n\delta_{z_i}/n \ \, : \, z_i\in \mathcal Z ~~\forall i\in [1,\dots, n] \}$ we have $$\cost_{\HR}^{\cN,\alpha,r}(\theta_0, \dist'_0)> \cost_A^{\cN,\alpha,r}(\theta_0, \dist'_0).$$ Then, there exists a data generating distribution $\dist$ and an adversary which poisons less than a fraction $\alpha$ of all training data points and perturbs the testing data with evasion bounded in a compact set in the interior of $\cN$ for which we have
  \[
    \limsup_{n\to\infty} \frac{1}{n}\log \Prob\!\left(\cost_A ^{\cN,\alpha,r}(\theta) < \cost_\test(\theta), ~\exists\theta\in \Theta \right) \geq \limsup_{n\to\infty} \frac{1}{n}\log  \Prob\!\left(\cost_A ^{\cN,\alpha,r}(\theta_0) < \cost_\test(\theta_0)\right) > -r.
  \]
\end{theorem}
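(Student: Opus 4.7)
The plan is to construct, for any $\cost_A$ strictly below $\cost_\HR$ at $(\theta_0, \dist'_0)$, a data-generating distribution $\dist$ and an admissible adversary under which the poisoned training empirical $\dist_n$ coincides exactly with $\dist'_0$ while the perturbed test distribution $\dist_\test$ produces expected loss strictly greater than $\cost_A^{\cN,\alpha,r}(\theta_0, \dist'_0)$. A Sanov-type lower bound will then give probability at least $\exp(-(r-\eta)n + o(n))$ for some $\eta > 0$ along the subsequence $n = k n_0$, with $n_0$ the support size of $\dist'_0$, which immediately delivers the asserted rate $\limsup_n n^{-1} \log \Prob > -r$.

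To match the HR ambiguity set with the generative semantics (poisoning on train, sampling, evasion on test), I would first rewrite it in natural chronological order. Invoking Lemma~\ref{lemma:composition} with the decomposition $\cN = \{0\} + \cN$ and $\alpha = \alpha + 0$, and then Lemma~\ref{lemma:support_functions} to swap the resulting inner $\LP_\cN \leq 0$ ball with the KL ball, yields
\[
  \cost_\HR^{\cN,\alpha,r}(\theta_0, \dist'_0) = \sup \Bigl\{ \textstyle\int \ell(\theta_0, z)\, \d\dist_\test'(z) : \exists\, \dist_2, \dist_1 ~\st~ \LP_{\{0\}}(\dist'_0, \dist_2) \leq \alpha,\; \KL(\dist_2, \dist_1) \leq r,\; \LP_\cN(\dist_1, \dist_\test') \leq 0 \Bigr\}.
\]
The three chained constraints now match training poisoning (TV), finite-sample KL fluctuation, and test-time evasion, respectively. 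Because $\cost_A^{\cN,\alpha,r}(\theta_0, \dist'_0) < \cost_\HR^{\cN,\alpha,r}(\theta_0, \dist'_0)$ and the right-hand side is continuous in the parameters $(\alpha, r, \cN)$ (using continuity and boundedness of $\ell$, $\alpha, r > 0$, and $\closure(\interior(\cN)) = \cN$), I can extract witnesses $\dist_2^\star, \dist_1^\star, \dist_\test^\star$ together with a small $\eta > 0$ and a compact $\cN'' \subset \interior(\cN)$ satisfying the strictly relaxed constraints $\LP_{\{0\}}(\dist'_0, \dist_2^\star) \leq \alpha - \eta$, $\KL(\dist_2^\star, \dist_1^\star) \leq r - \eta$, and $\LP_{\cN''}(\dist_1^\star, \dist_\test^\star) \leq 0$, while still $\int \ell(\theta_0, z)\, \d\dist_\test^\star(z) > \cost_A^{\cN,\alpha,r}(\theta_0, \dist'_0)$.

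The construction then takes $\dist \defeq \dist_1^\star$ as the data-generating distribution and has the adversary (i) evade each test point via the $\LP_{\cN''}$-transport carrying $\dist_1^\star$ onto $\dist_\test^\star$ and (ii) poison training by substituting at most $(\alpha - \eta/2) n$ clean samples whenever the clean empirical $\dist_n''$ lies within TV distance $\alpha - \eta/2$ of $\dist_2^\star$, with the substitutes chosen so that $\dist_n = \dist'_0$ exactly (feasible for $n = k n_0$). Sanov's lower bound on weak neighborhoods then yields
\[
  \Prob\bigl(\TV(\dist_n'', \dist_2^\star) \leq \alpha - \eta/2\bigr) \geq \exp\bigl(-(r - \eta/2) n + o(n)\bigr),
\]
and on this event $\cost_\test(\theta_0) = \int \ell(\theta_0, z)\, \d\dist_\test^\star(z) > \cost_A^{\cN,\alpha,r}(\theta_0, \dist'_0) = \cost_A^{\cN,\alpha,r}(\theta_0)$, establishing the rate.

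The main technical obstacle is the slack-extraction step: propagating the strict inequality $\cost_A < \cost_\HR$ into simultaneous strict slack in all three ambiguity constraints without eroding the loss gap. This requires sliding the slack between the LP and KL constraints by combining Lemmas~\ref{lemma:composition} and \ref{lemma:support_functions}, and relies on the regularity hypotheses (continuity of $\ell$, $\closure(\interior(\cN)) = \cN$, $\alpha > 0$, $r > 0$); the decomposition $\LP_\cN = \LP_{\{0\}} + \LP_\cN$ is what allows the poisoning budget to be strictly tightened independently of the evasion set $\cN$.
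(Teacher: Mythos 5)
Your high-level strategy is the same as the paper's: reorder the HR ambiguity set into the chronological chain (poisoning TV, statistical KL, evasion LP) using Lemmas \ref{lemma:composition} and \ref{lemma:support_functions}, extract near-optimal witnesses with slack bought by continuity/concavity of the HR value in $(\alpha,r)$, reverse-engineer the data-generating distribution and adversary from those witnesses (your $\dist_1^\star,\dist_2^\star$ play the roles of the paper's $\bar \dist''^\star,\bar\dist'^\star$, with the KL direction correctly matched to the Sanov rate), and finish with a large-deviations lower bound. However, two steps have genuine gaps. First, the poisoning-budget accounting is wrong: you condition on the event $\TV(\dist''_n,\dist_2^\star)\leq \alpha-\eta/2$ and claim the adversary can then substitute at most $(\alpha-\eta/2)n$ samples to force $\dist_n=\dist'_0$; but the adversary needs $\TV(\dist''_n,\dist'_0)\leq \alpha$, and the triangle inequality only gives $\TV(\dist''_n,\dist'_0)\leq (\alpha-\eta/2)+(\alpha-\eta)$, which can exceed $\alpha$. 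The event must instead be a small neighborhood of $\dist_2^\star$ (radius $O(\eta)$), or, as in the paper, exact equality with a type that is within TV distance $\alpha'<\alpha$ of $\dist'_0$. Second, the Sanov step is not applicable as stated: your witnesses $\dist_1^\star,\dist_2^\star$ are only known to exist in the ambiguity set and need not be finitely supported; if $\dist_2^\star$ has a non-atomic part, the empirical measure $\dist''_n$ (which is purely atomic) satisfies $\TV(\dist''_n,\dist_2^\star)$ bounded away from $0$, so a small TV ball has probability zero, and in any case TV balls are not open in the weak topology, so Sanov's lower bound does not directly cover them. This is exactly why the paper routes the argument through the finite reformulation of \citet{bennouna2022holistic}: it produces optimal $\dist'^\star,\dist'^\star_\test$ supported on the $n_0$ training points plus one loss-maximizing point, with weights that are integer multiples of $1/(k'n_0)$ (after choosing $\alpha'\in\mathbb Q$ and $k'$), so the rare event can be taken as exact equality of the clean empirical with a type, lower bounded by the method of types as $\exp(-n\KL(\bar\dist'^\star,\bar\dist''^\star))/(n+1)^{n_0+1}\geq \exp(-nr')/\mathrm{poly}(n)$, and the poisoning then changes exactly $\alpha' n$ points.

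A related, smaller omission: you assert the existence of a compact $\cN''\subset\interior(\cN)$ with $\LP_{\cN''}(\dist_1^\star,\dist_\test^\star)\leq 0$ while preserving the loss gap, citing $\closure(\interior(\cN))=\cN$ and continuity of $\loss$, but for general (non-finitely-supported) witnesses this requires also perturbing $\dist_\test^\star$ and a uniform-continuity argument that you do not supply; with the finite-support structure the paper simply nudges the finitely many perturbation vectors $z_i'-z_i$ into $\interior(\cN)$ at a cost of $\epsilon/4$. So the proposal is repairable by inserting the finite-reformulation/type structure (or an explicit discretization of the witnesses), shrinking the conditioning event, and handling the divisibility of $\alpha n$; without these, the probability lower bound and the admissibility of the poisoning adversary do not go through.
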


\begin{proof}
  Consider here $n_0\geq 1$ so that we have $\dist'_0\in \{ \sum_{i=1}^{n_0}\delta_{z_i}/n_0 \ \, : \, z_i\in \mathcal Z(n_0)\}$ for some $\mathcal Z(n_0)\defeq \{ z_i\in \mathcal Z\}_{i=1}^{n_0}$.
  In other words, the distribution $\dist'_0$ is the empirical distribution of $n_0$ data points in $\mathcal Z$. Let here $\epsilon\defeq \cost_{\HR}^{\cN,\alpha,r}(\theta_0, \dist'_0)-\cost_A^{\cN,\alpha,r}(\theta_0, \dist'_0)>0$. Without loss of generality we assume that we have ordered the points in $\mathcal Z(n_0)$ so that $\ell^\cN(\theta_0, z_1)\leq \dots \leq \ell^\cN(\theta_0, z_{n_0})$.

  We have by definition
  \[
    \cost_\HR ^{\cN,\alpha,r}(\theta_0; \dist'_0) \!\defeq\!\sup \{ \E_{\dist_\test'}[\loss(\theta_0,Z)] :  \exists \dist', \dist_\test' ~\st~ \LP_{\cN}(\dist'_0, \dist') \leq \alpha, \, \KL(\dist',\dist'_\test) \leq r\}.
  \]
  We first remark that as both the $\LP$ and $\KL$ metrics are jointly convex in the optimization variables $\dist'$ and $\dist_\test'$ and the objective $\E_{\dist_\test'}[\loss(\theta_0,Z)]$ a linear function of the variable $\dist'_\test$ the \ac{HR} loss $\cost_\HR ^{\cN,\alpha,r}(\theta_0; \dist'_0)$ is a concave function of the robustness parameters $\alpha$ and $r$.
  As concave functions are continuous on the interior of their domain the \ac{HR} loss is a continuous function at $r>0$ and $\alpha>0$.
  Hence, we can consider $0<r'<r$ and $\alpha\in \mathbb Q$, $0<\alpha'<\alpha$ so that
  \(
  \cost_\HR ^{\cN,\alpha',r'}(\theta_0; \dist'_0) + \epsilon/4 \geq \cost_\HR ^{\cN,\alpha,r}(\theta_0; \dist'_0).
  \)
  Consider an integer $k'\geq 1$ such that both $n'_0\defeq n_0k'\geq 1$ and $n'_0\alpha'= n_0k'\alpha'$ are integers.
  This choice of $k'$ implies that $\dist'_0$ can be seen as the empirical distribution of $n'_0$ data points in $\mathcal Z$ and our adversary will be able to precisely corrupt $n'_0\alpha'$ data points.
  
  Let us consider now a corrupted version of the support set $\mathcal Z'(n_0)$ defined here as $$\textstyle \mathcal Z'(n_0) = \{ z'_{i}\defeq z_{i} + \argmax_{\delta\in \cN, z_{i}+\delta\in \mathcal Z} \ell(\theta_0, z_{i}+\delta) \}_{i=1}^{n_0} \cup \{z'_\infty\in \argmax_{z\in \mathcal Z}\ell(\theta_0, z)\}$$
  which is well defined as $z\mapsto \ell(\theta_0, z)$ is continuous and $\cN$ compact. Each of the points $z_i'\in \mathcal Z'(n_0)$ represents an evasion corrupted version of its counterpart $z_i\in \mathcal Z(n_0)$ while the point $z'_\infty$ can be interpreted as a poisoned data point maximizing the loss.
  As the empirical distribution $\dist'_0$ is finitely supported, from Lemma 3.7 and Corollary 2.3 in \citet{bennouna2022holistic} it follows that for
  \begin{equation}
    \label{eq:opt-form}
    \textstyle\dist'^\star=\sum_{i=\tau+1}^{n_0} \delta_{z'_i} \dist'_0(z_i)+(1-\alpha-\sum_{i=\tau+1}^{n_0} \dist'_0(z_i))\delta_{z'_\tau}+\alpha \delta_{z'_\infty},
  \end{equation}
  with $\tau$ the smallest integer so that $1-\alpha-\sum_{i=\tau+1}^{n_0} \dist'_0(z_i)>0$, we can find a distribution $\dist'^\star_\test \in \{ \dist' \, : \, \supp(\dist')\subseteq \mathcal Z'(n_0) \}$
  with $$\LP_{\cN}( \dist'_0, \dist'^\star) \leq \alpha', \, \KL(\dist'^\star,\dist'^\star_\test)\leq r', ~\cost_\HR ^{\cN,\alpha',r'}(\theta_0; \dist'_0)=\E_{\dist_\test'^\star}[\loss(\theta_0,Z)].$$ In other words, the distributions $\dist'^\star$ and $\dist'^\star_\test$ represent maximizers in the optimization problem characterizing the HR cost $\cost_\HR ^{\cN,\alpha',r'}(\theta_0; \dist'_0)$. As $n'_0\alpha'$ is integer and  $\dist'_0\in \{ \sum_{i=1}^{n_0}\delta_{z_i}/n_0 \ \, : \, z_i\in \mathcal Z(n_0)\}$ it also follows immediately that
  $\dist'^\star\in  \{ \dist' \, : \, \supp(\dist')\subseteq \mathcal Z'(n_0), \, \dist'(z) n'_0 \in [0, \dots, n'_0] ~~\forall z\in \mathcal Z'(n_0) \}$.
  This last observation implies that also the distribution $\dist'^\star$, just as $\dist'_0$, can be seen as the empirical distribution of $n_0'$ points in $\mathcal Z$.

  Introduce a distribution $\bar \dist'^\star$ supported on $\mathcal Z(n_0)\cup \{z'_\infty\}$ by defining $\bar \dist'^\star(z_i) = \dist'^\star(z_i')$ for all $i\in [1,\dots, n_0]$ and $\bar \dist '^\star(z'_\infty) = \dist'^\star(z_\infty')$. In essence, we move each point in the support of $\dist'^\star$ from a perturbed position $z_i'$ back to $z_i$ with the exception of $z'_\infty$ which we leave as is. From Equation \eqref{eq:opt-form} we have
  \(
  \alpha'\geq \LP_{\cN}(\dist'_0, \dist'^\star) = \dist'^\star(z_\infty') = \bar \dist'^\star(z_\infty') = \LP_{\{0\}}(\dist'_0, \bar \dist'^\star).
  \)
  Likewise, define a distribution $\bar \dist''^\star$ supported on $\mathcal Z(n_0)\cup \{z'_\infty\}$ though $\bar \dist ''^\star(z_i) = \dist'^\star_\test(z_i')$ for all $i\in [1,\dots, n_0]$ and $\bar \dist''^\star(z'_\infty) = \dist'^\star_\test(z_\infty')$. In essence, we move each point in the support of $\dist'^\star_\test$ from a perturbed position $z_i'$ back to $z_i$ with the exception of $z'_\infty$ which we leave as is. It is easy to verify that by construction
  \(
  \KL(\bar \dist'^\star,\bar \dist''^\star)
  =
  \sum_{i=1}^{n_0} \log(\tfrac{\bar \dist'^\star(z_i)}{\bar \dist''^\star(z_i)}) \bar \dist'^\star(z_i)+\log(\tfrac{\bar \dist'^\star(z'_\infty)}{\bar \dist''^\star(z'_\infty)}) \bar \dist'^\star(z'_\infty)
  =
  \sum_{i=1}^{n_0} \log(\tfrac{\dist'^\star(z'_i)}{\dist'^\star_\test(z'_i)}) \dist'^\star(z'_i)+\log(\tfrac{\dist'^\star(z'_\infty)}{\dist'^\star_\test(z'_\infty)})  \dist'^\star(z'_\infty)
  =
  \KL(\dist'^\star,\dist'^\star_\test) \leq r'
  \)
  and
  \(
  \LP_{\cN}(\bar \dist''^\star, \dist'^\star_\test) = 0.
  \)
  In fact, as $z\mapsto \ell(\theta_0, z)$ is continuous and $\closure(\interior(\cN))=\cN$ we can find points $z_i''$ with $z_i''-z_i\in \interior(\cN)$ so that for $\dist''^\star_\test$ defined as $\dist''^\star_\test(z_i'')=\dist'^\star_\test(z_i')$, $i\in [1,\dots, n_0]$ satisfies $\LP_{\cN'}(\bar \dist''^\star, \dist'^\star_\test) = 0$ and $$\cost_\HR ^{\cN,\alpha',r'}(\theta_0; \dist'_0)
  =
  \E_{\dist_\test'^\star}[\loss(\theta_0,Z)]
  \leq \E_{\dist_\test''^\star}[\loss(\theta_0,Z)]+\epsilon/4$$ with $\cN' \defeq \{0\}\cup\{ \delta'_i \defeq z_i''-z_i\}_{i=1}^{n_0}\subseteq \interior(\cN)$. In essence, the points $z_i''$ are chosen to present a perturbation of the points $z_i$ of an evasion adversary limited to a evasion set within a compact subset in the interior of $\cN$.

Consider the data generation process with distribution $\bar \dist''^\star$, where $n\geq 1$ uncorrupted data points, of empirical distribution $\dist''_n$, are sampled from $\bar \dist''^\star$ supported on $\mathcal Z(n_0)\cup \{z'_\infty\}$. First, for this process, if an evasion adversary implements test data evasion with $\cN'$ then by construction of $\dist_\test''^\star$ we get $\cost_\test(\theta_0)\geq  \E_{\dist_\test''^\star}[\loss(\theta_0,Z)]$ simply by adding perturbation $\delta_i'$ when encountering realization $z_i$, $i\in[1,\dots, n_0]$ and $0$ in case $z'_\infty$ is observed.
  Remark that from $\dist'^\star\in  \{ \dist' \, : \, \supp(\dist')\subseteq \mathcal Z'(n_0), \, \dist'(z) n'_0 \in [0, \dots, n'_0] ~~\forall z\in \mathcal Z'(n_0) \}$ and the construction of $\bar \dist'^\star$ it follows that $\bar \dist'^\star\in  \{ \dist' \, : \, \supp(\dist')\subseteq \mathcal Z(n_0) \cup \{z'_\infty\}, \, \dist'(z) n'_0 \in [0, \dots, n'_0] ~~\forall z\in \mathcal Z(n_0) \cup \{z'_\infty\}\}$. In other words, the distribution $\bar \dist'^\star$ can be seen as the empirical distribution of $n_0'$ data points in $\mathcal Z(n_0) \cup \{z'_\infty\}$. Large deviation theory indicates that the event $\dist''_{kn'_0}=\bar \dist'^\star$ must occur with a nonzero probability.
  In particular, following Theorem 11.1.4 in \citet{cover1991information}, we have the lower bound
  \[
    \Prob(\dist''_{kn'_0}=\bar \dist'^\star)\geq \frac{\exp(-n'_0 k \KL(\bar \dist'^\star, \bar\dist''^\star))}{(k n'_0+1)^{n_0+1}}  \geq  \frac{\exp(-n'_0kr')}{(k n'_0+1)^{n_0+1}}  \quad \forall k\geq 1.
  \]
  However, if the event $\dist''_{kn'_0}=\bar \dist'^\star$ occurs then from Equation \eqref{eq:opt-form} it is clear that a poisoning adversary can perturb precisely $\alpha' k n'_0$ training data points to transform the training distribution $\dist''_{kn'_0}$ into a corrupted distribution, denoted here with $\dist_{kn_0'}$ that satisfies $\dist_{kn'_0}=\dist'_0$. Hence, it follows that with such a poisoning adversary which corrupts fewer than a fraction $\alpha$ of all data points we have $\Prob(\dist''_{kn'_0}=\bar \dist'^\star)\leq \Prob(\dist_{kn'_0}=\dist'_0)$. If however the event $\dist_{kn'_0}=\dist'_0$ occurs, we have by chaining the inequalities derived in the previous paragraph that $$\cost_\test(\theta_0)\geq \E_{\dist_\test''^\star}[\loss(\theta_0,Z)]\geq \cost_\HR ^{\cN,\alpha',r'}(\theta_0; \dist'_0)-\epsilon/4\geq \cost_\HR ^{\cN,\alpha,r}(\theta_0; \dist'_0)-\epsilon/2=\cost_A ^{\cN,\alpha,r}(\theta_0; \dist'_0)+\epsilon/2>\cost_A ^{\cN,\alpha,r}(\theta_0; \dist'_0).$$
  Consequently,
  \[
    \Prob(\cost_\test(\theta_0)>\cost_A ^{\cN,\alpha,r}(\theta_0; \dist'_0))\geq \Prob(\dist''_{kn'_0}=\bar \dist'^\star) \geq  \frac{\exp(-n'_0kr')}{(k n'_0+1)^{n_0+1}}  \quad \forall k\geq 1
  \]
  from which the result follows by passing to the limit $k\to\infty$ and recalling that $r'<r$.
\end{proof}

\section{Further details on experiments}
\label{app:experiments}

\subsection{Experimental Setup}\label{App: experimental_setup}

\textbf{MNIST} %
As stated in the main text, we use a CNN architecture specifically designed for MNIST and provided in the main Pytorch library 
\href{https://github.com/pytorch/examples/blob/main/mnist/main.py}{\faGithubSquare}.
This network is similar to the tensorflow variant originally used by \cite{madry2018towards}, consisting of two convolutional layers with 32 and 64 filters
respectively, followed by 2 × 2 max-pooling, and a fully connected layer of size 9216.
We use the ADAM optimizer with a learning rate of $1 \times 10^{-3}$ and without weight decay or dropout throughout the experiments. 

\textbf{CIFAR-10} %
For the majority of experiments in Section \ref{sec: experiments}, we use the ResNet18 architecture trained with the help of the ADAM optimizer with a starting learning rate of $1 \times 10^{-2}$ and without weight decay.
We use ADAM, rather than \ac{SGD}, to avoid need for manual tuning of the learning rate in our varied set of models and experiments.
In Section \ref{rob_of}, exceptionally, we use the same configurations as \cite{rice2020overfitting} for training.
That is, we use here \ac{SGD} with a starting learning rate of $0.1$ and a fixed learning rate decay schedule, as well as implementing their chosen weight decay of $5 \times 10^{-4}$.
We use the same learning rate decay schedule provided by \cite{rice2020overfitting}.
The attacks are carried out by a 10-step \ac{PGD} adversary.
For the robust overfitting experiments in Section \ref{rob_of}, we use an $\ell_\infty$ adversary of size $\epsilon = 8/255$ with a step size of $2/255$ mimicking the radius and step size considered by \cite{rice2020overfitting}.
For the holistic robustness experiments in Section \ref{HR}, we use an $\ell_2$ adversary of size $\epsilon = 25.5/255 = 0.1$.
This is smaller than, for instance, the attack size used by \cite{rice2020overfitting}, but we remark that such an adversary causes catastrophic overfitting in our setting, since we additionally face data poisoning and statistical error.
We use an adversarial step size of $0.2$, which is the default for the $\ell_2$ attack implemented in the \hyperlink{https://github.com/Harry24k/adversarial-attacks-pytorch}{torchattacks}
\href{https://github.com/Harry24k/adversarial-attacks-pytorch}{\faGithubSquare} library.

Except where stated otherwise, we train throughout all experiments for 300 epochs, which is longer than \cite{madry2018towards} and \cite{rice2020overfitting}.
We do so because in many cases we are working with a reduced number of batches (due to subsampling only a fraction of the available data), and hence the number of training steps scales down proportionally with the size of the subsampled dataset. We train for more epochs to ensure in particular convergence given fewer training batches per epoch.

\subsection{Benchmarking}
\label{app:benchmarking}

We benchmark our proposed \ac{HR} training method against other procedures. We describe here briefly how each of these procedures are ran and in particular how each of their hyperparameters are tuned.

\begin{itemize}
    \item \textbf{ERM}: there are no hyperparameters to tune.

    \item \textbf{PGD}: we validate over the parameter range $\epsilon \in \{0, 0.05, 0.1, 0.2\}$ associated with protection against $\ell_2$ evasion attacks. As stated before, we use an $\ell_2$ adversary with $\epsilon = 0.1$ to perturb the test data and hence the considered range contains the actual attack radius. When selecting the appropriate value for $\epsilon$ for HR training we consider the same range.


    \item \textbf{\ac{KL} DRO}: KL DRO counts just one hyperparameter $r$ which controls its robustness to statistical error. We consider here $r \in \{0, 0.05, 0.1, 0.2\}$. When selecting the appropriate value for $r$ for HR training we consider again the same range.

    \item \textbf{\ac{TV} DRO}: For TV DRO, there is also just one hyperparameter $\alpha$ controlling the robustness to data poisoning. We consider $\alpha \in \{0, 0.05, 0.1, 0.2\}$. When selecting the appropriate value for $\alpha$ for HR training we consider again the same range.

    \item \textbf{TRADES}: We use the $\ell_2$-adversary version of TRADES. We grid search over the two TRADES hyperparameters $\beta$ and $\epsilon$. We use $\beta \in \{1, 6\}$, the two values used in their comparison of defense methods, and $\epsilon \in \{0, 0.05, 0.1, 0.2\}$, the same range implemented for other methods.

    \item \textbf{DPA}: A single parameter $k$ controls the number of partitions over which each of the base classifiers are fit. We try $k \in \{5, 10, 50\}$. Since we are testing DPA on sub-sampled data, we avoid using larger $k$ as each partition is already comparatively smaller than when using the full dataset. Computing the training and testing loss for DPA (e.g. \autoref{fig: validation}) differs slightly compared to other methods, due to the nature of the DPA partitioning procedure. Training loss is taken as the average cross-entropy for each of the base classifiers. Testing loss is computed as follows: we first aggregate to find the average value of the output layer across the base classifiers. We then compute the cross-entropy loss of these outputs on the test set.

     \item \textbf{DPA + PGD}: 
     DPA splits the data into partitions, trains a separate model on each partition, and then ensembles the models to give the final prediction. 
     We combine DPA and PGD naturally by training the models on each partition with PGD. Hence, the overall training of DPA+PGD splits the data into partitions, train a robust model with PGD (as opposed to natural training) on each partition, and then ensemble the models.
     
     We cross-validate over the individual parameters used for DPA and PGD individually. That is $k \in \{5, 10, 50\}$ and $\epsilon \in \{0, 0.05, 0.1, 0.2\}$, giving 12 models in total.

    \item \textbf{HR}: We search over the grid $(\alpha, r, \epsilon) \in \{0, 0.05, 0.1, 0.2\}^3$, giving 4 options per robustness parameter and 64 models in total.
\end{itemize}

\subsection{Experimental Results}\label{App: experiments}

\subsubsection{Single parameter experiments}\label{App: single_param}

\begin{figure}[ht!]\label{fig: MNIST curves}
\centering
\begin{subfigure}
  \centering
  \includegraphics[width=0.4\columnwidth]{./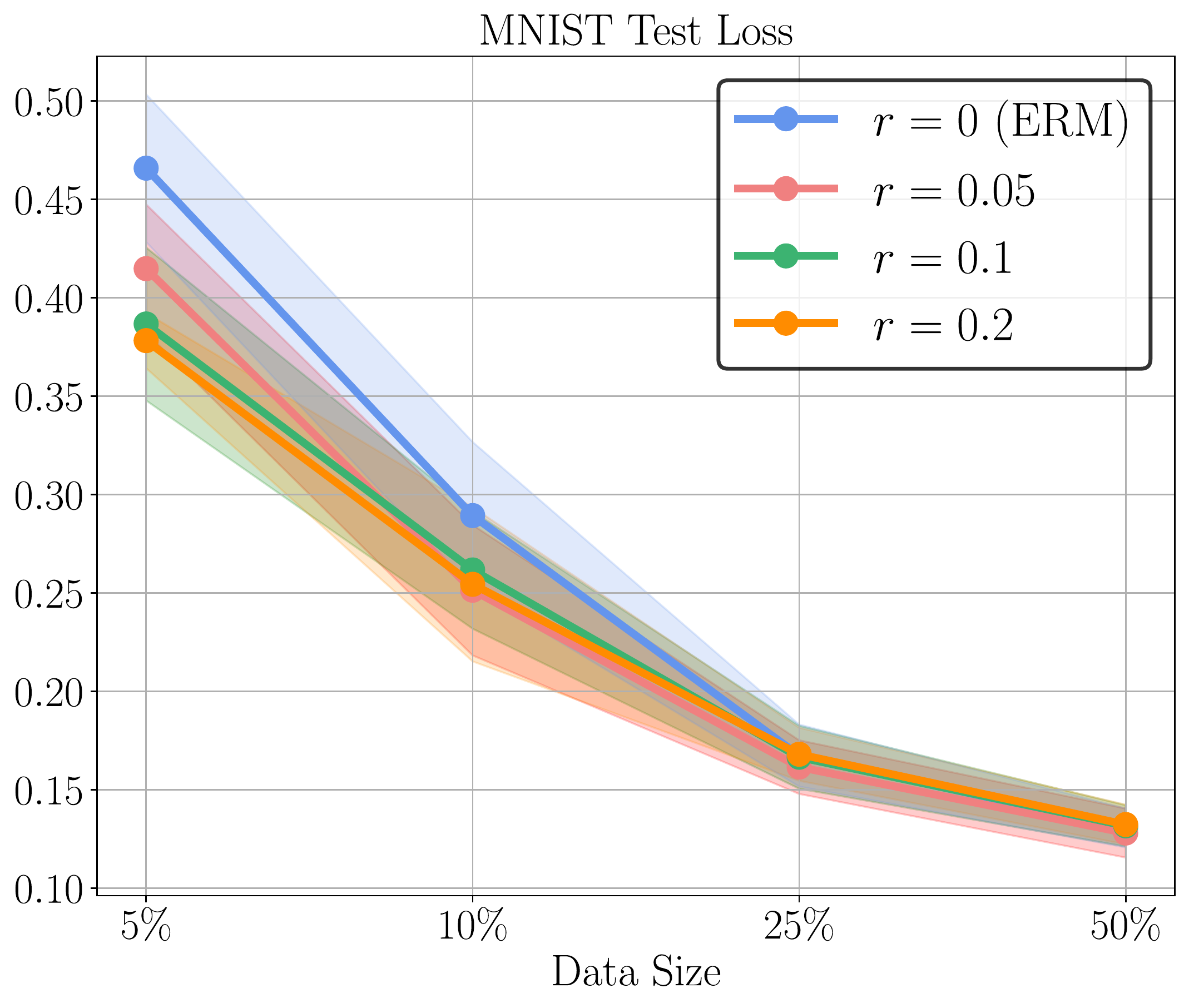}
\end{subfigure}%
\begin{subfigure}
  \centering
  \includegraphics[width=0.4\columnwidth]{./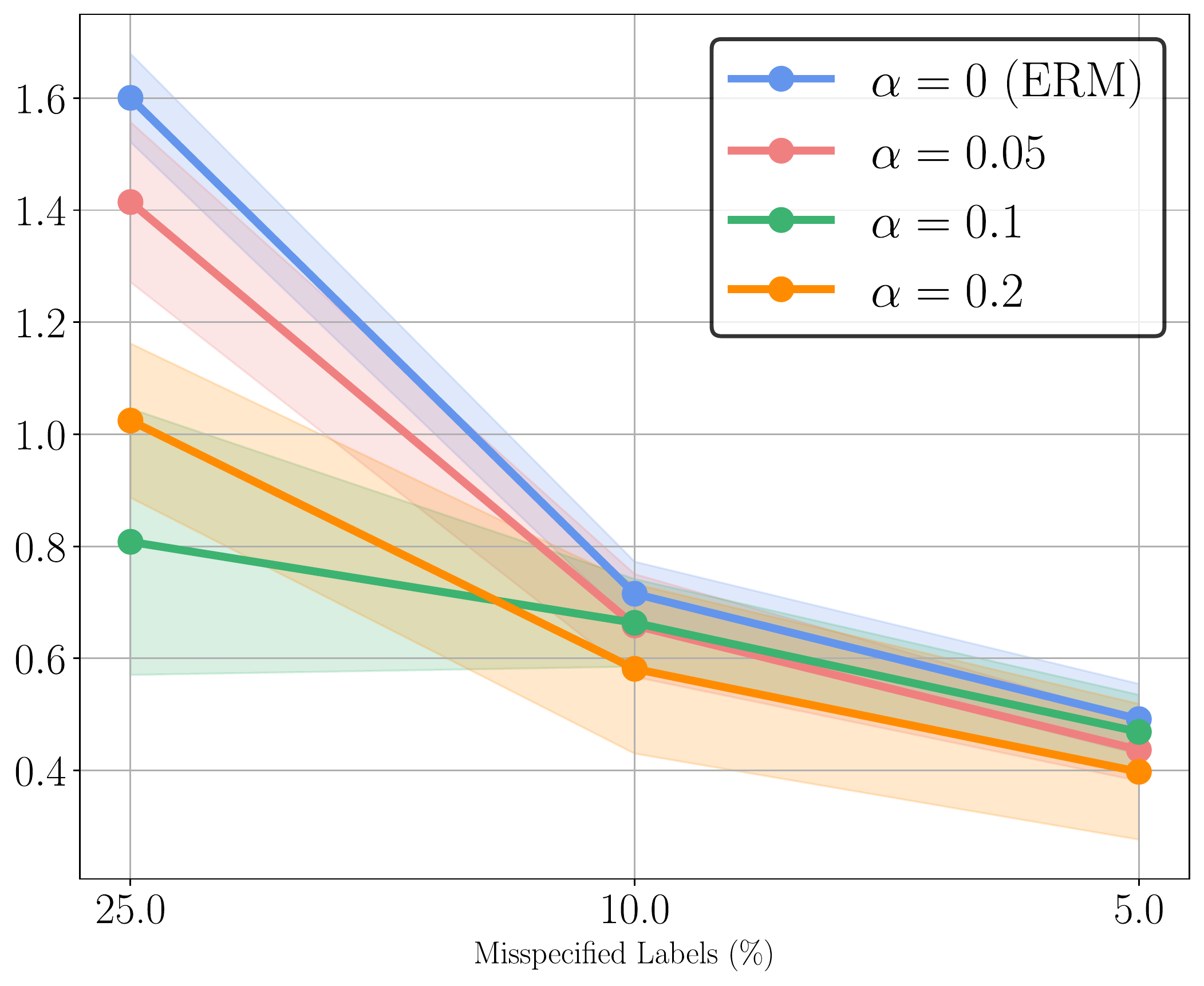}
\end{subfigure}%
\caption{Robustness to statistical error (left) and data poisoning (right) for MNIST. The same results are shown for CIFAR-10 in \autoref{fig:single_param} of the main text.}
\label{fig:single_param_mnist}
\end{figure}

\begin{figure}[ht!]\label{fig: r_u}
  \centering
  \includegraphics[width=1\textwidth]{./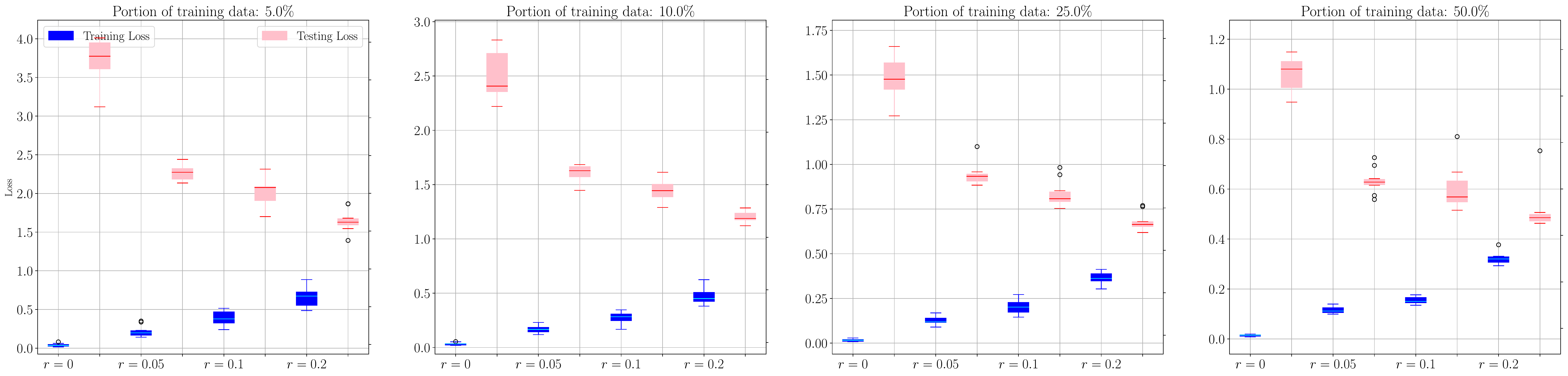}
  \caption{Experimenting HR with various values of $r$ when learning under statistical error on CIFAR-10. Here, $\alpha=0$ and $\cN = \{0\}$. See section \ref{sec:Evaluating Robustness} for setup.}
\end{figure}

\begin{figure}[ht!]\label{fig: alpha_u}
  \centering
  \includegraphics[width=1\textwidth]{./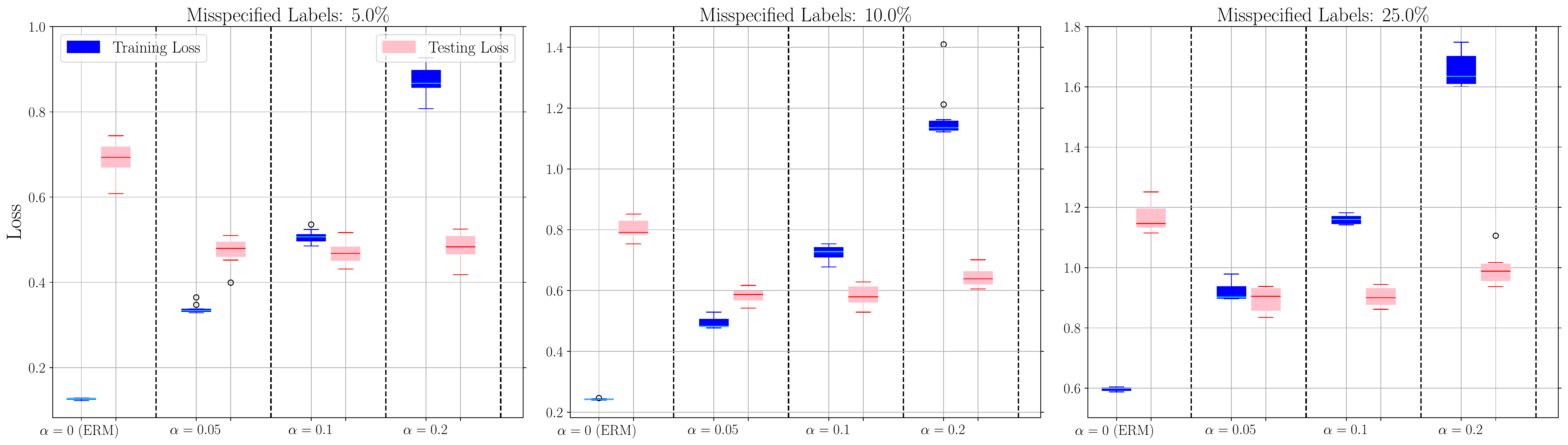}
  \caption{Experimenting HR with various values of $\alpha$ when learning under poisoning on CIFAR-10. Here, $r=0$ and $\cN = \{0\}$. See section \ref{sec:Evaluating Robustness} for setup.
  }
\end{figure}

\end{document}